\def\isarxiv{1}

\ifdefined\isarxiv

\documentclass[sigconf,authorversion]{acmart}
\setcopyright{none}
\renewcommand\footnotetextcopyrightpermission[1]{}
\else

\documentclass[sigconf,review,anonymous]{acmart}

\fi

\usepackage[utf8]{inputenc} % allow utf-8 input
\usepackage[T1]{fontenc}    % use 8-bit T1 fonts
\usepackage{hyperref}       % hyperlinks
\usepackage{url}            % simple URL typesetting
\usepackage{booktabs}       % professional-quality tables
\usepackage{amsfonts}       % blackboard math symbols
\usepackage{nicefrac}       % compact symbols for 1/2, etc.
\usepackage{microtype}      % microtypography
\usepackage{xcolor}         % colors

\usepackage{bm}
\usepackage{geometry}
\usepackage{multirow}
\usepackage{booktabs}
\usepackage{algorithm}
\usepackage{algorithmicx}
\usepackage{algpseudocode}
\usepackage{amsmath}
\usepackage{comment}
\usepackage{ulem}
\usepackage{tikz}
\usepackage{amsthm}
\usepackage{enumitem}
\usepackage{subcaption}
\usepackage[most]{tcolorbox} 

\allowdisplaybreaks[4]

\makeatletter
\def\th@acmplain{%
  \setlength{\parindent}{0pt}%
  \thm@headfont{\bfseries\scshape}% heading font (Theorem 1.1) → bold small caps
  \thm@notefont{\normalfont\scshape}% note font (convergence) → normal weight small caps
}
\makeatother

\definecolor{lightgraybg}{RGB}{235,235,235}
\newtcolorbox{graybox}{
    colback=lightgraybg,
    colframe=lightgraybg,
    fonttitle=\bfseries,
    boxsep=2pt,
    left=0pt,
    right=0pt,
    top=0pt,
    bottom=0pt,
    breakable,
    arc=0mm,
    before skip=\topsep,
    after skip=\topsep
}

\newtheorem{theorem}{Theorem}[section]

\newtheorem{proposition}[theorem]{Proposition}
\newtheorem{definition}[theorem]{Definition}
\newtheorem{remark}[theorem]{Remark}

\tcolorboxenvironment{theorem}{%
    colback=lightgraybg,
    colframe=lightgraybg,
    fonttitle=\bfseries,
    boxsep=2pt,
    left=0pt,
    right=0pt,
    top=0pt,
    bottom=0pt,
    breakable,
    arc=0mm,
    before skip=\topsep,
    after skip=\topsep
}

\tcolorboxenvironment{problem}{%
    colback=lightgraybg,
    colframe=lightgraybg,
    fonttitle=\bfseries,
    boxsep=2pt,
    left=0pt,
    right=0pt,
    top=0pt,
    bottom=0pt,
    breakable,
    arc=0mm,
    before skip=\topsep,
    after skip=\topsep
}

\tcolorboxenvironment{claim}{%
    colback=lightgraybg,
    colframe=lightgraybg,
    fonttitle=\bfseries,
    boxsep=2pt,
    left=0pt,
    right=0pt,
    top=0pt,
    bottom=0pt,
    breakable,
    arc=0mm,
    before skip=\topsep,
    after skip=\topsep
}

\tcolorboxenvironment{corollary}{%
    colback=lightgraybg,
    colframe=lightgraybg,
    fonttitle=\bfseries,
    boxsep=2pt,
    left=0pt,
    right=0pt,
    top=0pt,
    bottom=0pt,
    breakable,
    arc=0mm,
    before skip=\topsep,
    after skip=\topsep
}

\tcolorboxenvironment{lemma}{%
    colback=lightgraybg,
    colframe=lightgraybg,
    fonttitle=\bfseries,
    boxsep=2pt,
    left=0pt,
    right=0pt,
    top=0pt,
    bottom=0pt,
    breakable,
    arc=0mm,
    before skip=\topsep,
    after skip=\topsep
}

\tcolorboxenvironment{proposition}{%
    colback=lightgraybg,
    colframe=lightgraybg,
    fonttitle=\bfseries,
    boxsep=2pt,
    left=0pt,
    right=0pt,
    top=0pt,
    bottom=0pt,
    breakable,
    arc=0mm,
    before skip=\topsep,
    after skip=\topsep
}

\tcolorboxenvironment{definition}{%
    colback=lightgraybg,
    colframe=lightgraybg,
    fonttitle=\bfseries,
    boxsep=2pt,
    left=0pt,
    right=0pt,
    top=0pt,
    bottom=0pt,
    breakable,
    arc=0mm,
    before skip=\topsep,
    after skip=\topsep
}

\tcolorboxenvironment{remark}{%
    colback=lightgraybg,
    colframe=lightgraybg,
    fonttitle=\bfseries,
    boxsep=2pt,
    left=0pt,
    right=0pt,
    top=0pt,
    bottom=0pt,
    breakable,
    arc=0mm,
    before skip=\topsep,
    after skip=\topsep
}

\tcolorboxenvironment{fact}{%
    colback=lightgraybg,
    colframe=lightgraybg,
    fonttitle=\bfseries,
    boxsep=2pt,
    left=0pt,
    right=0pt,
    top=0pt,
    bottom=0pt,
    breakable,
    arc=0mm,
    before skip=\topsep,
    after skip=\topsep
}

\newcommand{\R}{\mathbb{R}}

\newcommand{\wt}[1]{\widetilde{#1}}
\newcommand{\ov}[1]{\overline{#1}}

\newcommand{\argmin}{\mathop{\arg\min}}

\newcommand{\thetab}{\boldsymbol{\theta}}

\newcommand{\xb}{\mathbf{x}}

\newcommand{\Ab}{\mathbf{A}}

\newcommand{\Db}{\mathbf{D}}

\newcommand{\Xb}{\mathbf{X}}

\newcommand{\Ccal}{\mathcal{C}}

\newcommand{\Ecal}{\mathcal{E}}

\newcommand{\Gcal}{\mathcal{G}}

\newcommand{\Lcal}{\mathcal{L}}

\newcommand{\Vcal}{\mathcal{V}}

\newcommand{\Ycal}{\mathcal{Y}}

\newcommand{\AppendixName}{Appendix}

\input{__append_toc}

\newcommand{\circledcolor}[1]{\textcolor{#1}{\raisebox{-0.5ex}{\Huge $\bullet$}}}

\definecolor{nodeinjected}{HTML}{D08268}
\definecolor{nodeunlabeled}{HTML}{EDBC81}
\definecolor{nodelabeled}{HTML}{FCF3E7}

\AtBeginDocument{%
  }

\setcopyright{acmlicensed}
\copyrightyear{2018}
\acmYear{2018}
\acmDOI{XXXXXXX.XXXXXXX}
\acmConference[Conference acronym 'XX]{Make sure to enter the correct
  conference title from your rights confirmation email}{June 03--05,
  2018}{Woodstock, NY}
\acmISBN{978-1-4503-XXXX-X/2018/06}

\begin{document}

%%
%% The "title" command has an optional parameter,
%% allowing the author to define a "short title" to be used in page headers.
\title{Attack by Unlearning: Unlearning-Induced Adversarial Attacks on Graph Neural Networks}
%\title{Adversarial Attacks on Graph Neural Networks via Adversarial Unlearning}

%%
%% The "author" command and its associated commands are used to define
%% the authors and their affiliations.
%% Of note is the shared affiliation of the first two authors, and the
%% "authornote" and "authornotemark" commands
%% used to denote shared contribution to the research.

\ifdefined\isarxiv

\author{Jiahao Zhang, Yilong Wang, Suhang Wang}
\affiliation{%
  \institution{The Pennsylvania State University}
  \city{University Park, PA}
  \country{USA}
  }
\email{{jiahao.zhang, yvw5769, szw494}@psu.edu}

\else

\author{Jiahao Zhang}
\affiliation{%
  \institution{The Pennsylvania State University}
  \city{University Park}
  \country{United States}
  }
\email{jiahao.zhang@psu.edu}

\author{Yilong Wang}
\affiliation{%
  \institution{The Pennsylvania State University}
  \city{University Park}
  \country{United States}
  }
\email{yvw5769@psu.edu}

\author{Suhang Wang}
\affiliation{%
  \institution{The Pennsylvania State University}
  \city{University Park}
  \country{United States}
  }
\email{szw494@psu.edu}
\fi

%%
%% By default, the full list of authors will be used in the page
%% headers. Often, this list is too long, and will overlap
%% other information printed in the page headers. This command allows
%% the author to define a more concise list
%% of authors' names for this purpose.
\renewcommand{\shortauthors}{Zhang et al.}

%%
%% The abstract is a short summary of the work to be presented in the
%% article.
\begin{abstract}
  Graph neural networks (GNNs) are widely used for learning from graph-structured data in domains such as social networks, recommender systems, and financial platforms. To comply with privacy regulations like the GDPR, CCPA, and PIPEDA, approximate graph unlearning, which aims to remove the influence of specific data points from trained models without full retraining, has become an increasingly important component of trustworthy graph learning. However, approximate unlearning often incurs subtle performance degradation, which may incur negative and unintended side effects. 
In this work, we show that such degradations can be amplified into adversarial attacks. We introduce the notion of \textbf{unlearning corruption attacks}, where an adversary injects carefully chosen nodes into the training graph and later requests their deletion. Because deletion requests are legally mandated and cannot be denied, this attack surface is both unavoidable and stealthy: the model performs normally during training, but accuracy collapses only after unlearning is applied. 
Technically, we formulate this attack as a bi-level optimization problem: to overcome the challenges of black-box unlearning and label scarcity, we approximate the unlearning process via gradient-based updates and employ a surrogate model to generate pseudo-labels for the optimization. 
Extensive experiments across benchmarks and unlearning algorithms demonstrate that small, carefully designed unlearning requests can induce significant accuracy degradation, raising urgent concerns about the robustness of GNN unlearning under real-world regulatory demands. The source code will be released upon paper acceptance. 

\end{abstract}

%\ifdefined\isarxiv
%\else
%%
%% The code below is generated by the tool at http://dl.acm.org/ccs.cfm.
%% Please copy and paste the code instead of the example below.
%%
\begin{CCSXML}
<ccs2012>
   <concept>
       <concept_id>10010147.10010257</concept_id>
       <concept_desc>Computing methodologies~Machine learning</concept_desc>
       <concept_significance>500</concept_significance>
       </concept>
   <concept>
       <concept_id>10002978.10003022.10003026</concept_id>
       <concept_desc>Security and privacy~Web application security</concept_desc>
       <concept_significance>500</concept_significance>
       </concept>
   <concept>
       <concept_id>10002978.10003006.10011634</concept_id>
       <concept_desc>Security and privacy~Vulnerability management</concept_desc>
       <concept_significance>500</concept_significance>
       </concept>
 </ccs2012>
\end{CCSXML}

\ccsdesc[500]{Computing methodologies~Machine learning}
\ccsdesc[500]{Security and privacy~Web application security}
\ccsdesc[500]{Security and privacy~Vulnerability management}

%%
%% Keywords. The author(s) should pick words that accurately describe
%% the work being presented. Separate the keywords with commas.
\keywords{Graph Unlearning, Adversarial Attack, Graph Neural Networks}
%% A "teaser" image appears between the author and affiliation
%% information and the body of the document, and typically spans the
%% page.
% \begin{teaserfigure}
%   \includegraphics[width=\textwidth]{sampleteaser}
%   \caption{Seattle Mariners at Spring Training, 2010.}
%   \Description{Enjoying the baseball game from the third-base
%   seats. Ichiro Suzuki preparing to bat.}
%   \label{fig:teaser}
% \end{teaserfigure}

% \received{20 February 2007}
% \received[revised]{12 March 2009}
% \received[accepted]{5 June 2009}

%\fi

%%
%% This command processes the author and affiliation and title
%% information and builds the first part of the formatted document.
\maketitle

\section{Introduction}

Graph-structured data are ubiquitous in the real world, such as social networks~\cite{fan2019graph,sankar2021graph,liu2024score}, recommender systems~\cite{wang2019neural,he2020lightgcn,zhang2024linear}, financial networks~\cite{dou2020enhancing,liu2020alleviating,wang2025enhance}, knowledge bases~\cite{liu2025exposing,yang2026query,luo2026graphs}, and biological interaction networks~\cite{strokach2020fast,truong2024prediction,xu2025dualequinet}. 
Graph neural networks (GNNs) have become the de facto approach for learning from graph-structured data because they can effectively combine node features and graph topology to produce powerful predictions~\cite{kipf2017semi,velickovic2018graph,xu2019powerful}. 
At the same time, many graph datasets may contain sensitive or outdated information that must be efficiently removed from the trained GNN models~\cite{zhang2021membership, xin2023user,pedarsani2011privacy,wang2024data}. For example, purchasing records in recommender systems~\cite{chen2022recommendation,zhang2024recommendation} or loan histories in financial networks~\cite{pedarsani2011privacy,wang2024data} may reveal personally identifying or regulated information. When GNNs are trained on such data and the models are released via APIs, the models themselves can encode sensitive information and become a source of privacy and compliance risk. In addition, regulations such as the GDPR~\cite{gdpr}, CCPA~\cite{ccpa}, and PIPEDA~\cite{pipeda} further require systems to respect the user's ``right to be forgotten'', motivating a large and growing literature on graph unlearning~\cite{chien2022certified,chen2022graph,wu2023gif}, methods for removing the influence of specific data points from a trained GNN without full retraining.

\begin{figure*}[!ht]
    \centering
    \includegraphics[width=1\linewidth]{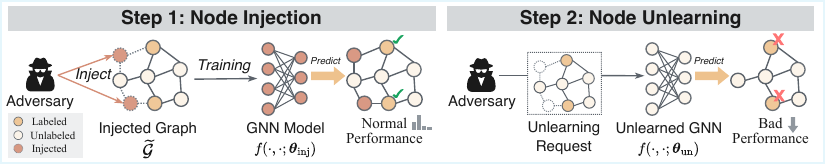}
    \vskip -0.5em
    \caption{Attack setting of the proposed unlearning corruption attack on GNNs.} 
    \vskip -0.05in
    \label{fig:attack_setting}
\end{figure*}

Despite the recent advancements in graph unlearning, concerns regarding its negative side effects are rising. One of the most well-known side effects is that unlearning is necessarily an approximation of full retraining, and this approximation can incur subtle but unavoidable performance degradation~\cite{chen2023boundary,yao2024large,zhang2024negative,wu2023gif}. In this work, we ask a timely and fundamental research question: 

\begin{graybox}
    {\it Can the unavoidable performance degradation in unlearning be amplified and weaponized to break a GNN, by carefully crafting the unlearning requests themselves?}
\end{graybox}

We refer to such manipulations as {\bf unlearning corruption attacks} (Figure~\ref{fig:attack_setting}). Concretely, an attacker first injects carefully crafted adversarial nodes (the brown circles \circledcolor{nodeinjected} in Figure~\ref{fig:attack_setting}) into the training graph, and later the attacker will submit deletion (unlearning) requests to remove those nodes from a GNN trained on the poisoned graph. The nodes are crafted in a way such that the approximate unlearning of those nodes from the target GNN would substantially degrade prediction performance on unlabeled nodes (the gold circles \circledcolor{nodeunlabeled} in Figure~\ref{fig:attack_setting}).

This new attack surface for graph unlearning is highly practical in real-world scenarios for two key reasons. 
First, the unlearning requests are \textbf{hard to reject} from the model owner's perspective: node creation and deletion are legitimate user operations, where deletion requests in particular are often legally mandated by privacy laws. Refusing such requests can carry legal and reputational costs, which constrain defenders' choices. Second, the attack is \textbf{highly stealthy}: while injected nodes may not noticeably affect model accuracy during training, performance collapse can appear only after deletion/unlearning is performed (i.e., often only when the model is already deployed). 
Furthermore, we would like to also emphasize that this attack paradigm is novel, since it uniquely exploits the unlearning process itself as the trigger mechanism, distinct from traditional poisoning or backdoor attacks. To support this claim, we provide a comprehensive comparison with existing attacks in Section~\ref{sec:formal_prob_dfn} and Appendix~\ref{sec:discussion}.

In this paper, we present a systematic study of the vulnerabilities of graph unlearning in a semi-supervised node-classification setting. We first identify the core principles governing effective unlearning corruption by formalizing three key adversarial goals: \textbf{(G1) Post-unlearning damage}, ensuring the model's accuracy collapses significantly on the target population after the unlearning request; \textbf{(G2) Pre-unlearning utility}, requiring the attack to remain latent and stealthy during the initial training phase; and \textbf{(G3) Stealthiness under benign unlearning}, ensuring that unlearning other, benign nodes does not accidentally trigger the degradation. Achieving these adversarial goals simultaneously presents several technical challenges: (i) the unlearning operator is often a ``black box'' or involves complex gradient updates~\cite{wu2023gif,wu2023certified,wu2024graphguard}, making it difficult to compute the gradients necessary to optimize the trigger for the post-unlearning state (G1). Second, the attacker typically lacks ground-truth labels for the unlabeled nodes, preventing direct optimization of the utility and stealthiness constraints (G2, G3). 

To address these challenges, we propose a novel unlearning corruption attack framework that mathematically formalizes the attacker's problem as a bi-level optimization task. We address the black-box challenge by approximating the unlearning process via gradient-based updates and address the label scarcity issue by training a surrogate model to generate pseudo-labels. This allows our framework to solve for the optimal injection of nodes and edges that maximally degrade the model upon unlearning while strictly adhering to utility and stealthiness constraints. Extensive experiments across datasets and unlearning algorithms demonstrate that these carefully designed unlearning requests can induce significant accuracy drop, raising urgent concerns about the robustness of graph unlearning under real-world regulatory demands.

{\bf Contributions.} Our work makes the following contributions:
\begin{itemize}[leftmargin=0.05\linewidth]
    \item We introduce the notion of {\bf unlearning corruption attacks} against GNNs and formalize a threat model where adversaries inject nodes and later submit unlearning requests that the model owner must follow due to privacy regulations.
    \item We propose a novel unlearning corruption attack framework that simultaneously addresses all three attack goals (G1-G3). Our framework optimizes the trade-off between maintaining pre-unlearning utility and maximizing post-unlearning damage, while ensuring performance remains unaffected under benign unlearning requests.
    \item We demonstrate the effectiveness of our attacks on multiple GNN models, datasets, and graph unlearning algorithms, and discuss practical defenses and the tension between compliance and robustness.
\end{itemize}

\section{Related Works} \label{sec:rel_works}

{\bf Graph Unlearning.} 
Graph unlearning aims to efficiently remove the impact of undesired data from trained graph learning models~\cite{chen2022graph,said2023survey,fan2025opengu}. 
The main challenge lies in balancing model utility, 
efficiency, and removal guarantees, which have led to two primary research directions: retrain-based and approximate unlearning. 

Retrain-based methods divide the training graph into disjoint subgraphs and train separate submodels, enabling unlearning by retraining only the affected partitions. 
GraphEraser~\cite{chen2022graph} introduced the first retraining-based GNN unlearning framework, leveraging balanced graph clustering for partitioning and a weighted ensemble of submodels with trainable fusion weights to retain high utility. 
GUIDE~\cite{wang2023inductive} further extends this paradigm to inductive graph learning scenarios. 
Follow-up studies~\cite{wang2023inductive,li2023ultrare,zhang2024graph,li2025community} have advanced the trade-off between utility and efficiency using techniques like data condensation~\cite{li2025community} and improved clustering strategies~\cite{li2023ultrare,zhang2024graph}. 

Approximate unlearning removes the target data’s influence by directly updating model parameters. 
Certified graph unlearning~\cite{chien2022certified} offers one of the earliest approximate unlearning frameworks for SGC~\cite{wu2019simplifying}, establishing formal removal guarantees. 
GraphGuard~\cite{wu2024graphguard} provides a complete framework to address data misuse in GNNs, incorporating gradient ascent-based unlearning as a key component. 
Such gradient ascent formulations can be approximated via Taylor expansion, motivating a family of influence function-based methods~\cite{koh2017understanding,wu2023gif,wu2023certified}. 
GIF~\cite{wu2023gif} adapts influence functions to graph unlearning across feature, node, and edge removal cases, while CEU~\cite{wu2023certified} specializes in edge unlearning and derives theoretical bounds for its guarantees. 
Building on these ideas, IDEA~\cite{dong2024idea} presents a general and theoretically grounded framework that provides removal guarantees for node, edge, and feature unlearning. 
Recent progress has improved the scalability~\cite{pan2023unlearning,li2024tcgu,yi2025scalable,yang2025erase,zhang2025dynamic}, model performance~\cite{li2024towards,zhang2025node,ding2025adaptive}, and completeness~\cite{kolipaka2025cognac,wu2025verification} of approximate unlearning, expanding its practical applicability. 
A distinct contribution is GNNDelete~\cite{cheng2023gnndelete}, which achieves unlearning through an intermediate node embedding mask inserted between GNN layers. 
This design provides a unique alternative to both retraining-based and parameter-update unlearning methods.

{\bf Adversarial Attacks for GNNs.} 
Adversarial attacks aim to degrade the prediction performance of GNNs by manipulating specific inputs or training data, which can be generally categorized into three prototypes: evasion, poisoning, and backdoor attacks~\cite{sun2022adversarial,zhang2024trustworthy,dai2024comprehensive}. 
Evasion attacks directly perturb the input graph to reduce model performance and have been shown to be effective against both white-box~\cite{dai2018adversarial,wu2019adversarial,xu2019topology} and black-box GNNs~\cite{dai2018adversarial,ma2021graph,geisler2021robustness}. 
Poisoning attacks insert or modify nodes and edges in the training graph, causing the trained model to learn corrupted representations and perform poorly~\cite{zugner2018adversarial,bojchevski2019adversarial,zugner2019adversarial, sun2020adversarial}. 
Backdoor attacks are more stealthy: they first poison the training data to implant hidden triggers that do not affect normal predictions but, when specific trigger patterns appear, lead to severe performance degradation~\cite{zhang2021backdoor,xi2021graph,dai2023unnoticeable,zhang2025robustness}. 
\textit{Our proposed attack fundamentally differs from these existing lines of research, as it explores a new attack surface in the context of graph unlearning, where mandatory data removal operations can be exploited to reduce model performance}. A more detailed discussion is shown in Section~\ref{sec:formal_prob_dfn}.

{\bf Negative Side Effects of Machine Unlearning.} 
Machine unlearning is especially critical in privacy-sensitive domains, such as autonomous driving~\cite{zhang2024smartcooper,fang2024pacp,goel2024corrective} and healthcare~\cite{fu2021sagn,lei2025watchguardian,hardan2025forget}, where users' right to remove their personal data carries real-world consequences. 
Although machine unlearning effectively removes the influence of undesired data from trained ML models, its potential negative side effects have recently attracted growing attention. 
The most direct side effect is a slight but inevitable degradation in model performance, which has been consistently observed across vision~\cite{chen2023boundary,li2024single,zhou2025decoupled}, language~\cite{yao2024large,zhang2024negative,zhang2025catastrophic}, and graph domains~\cite{wu2023gif,wu2023certified,dong2024idea}. 
In addition, several studies have shown that unlearning may be incomplete for the unlearned data~\cite{hu2024learn,zhang2025unlearning,wu2025verification,sui2025recalling} or can amplify the privacy leakage of remaining data (also known as the ``onion effect'')~\cite{carlini2022privacy,zhao2024makes}, creating new privacy attack surfaces. 
Early findings also reveal that machine unlearning can be exploited to implant or trigger backdoors in ML models~\cite{liu2024backdoor,huang2024uba,ren2025keeping}, activate previously hidden poisoned samples~\cite{di2023hidden}, or cause significant performance degradation under malicious requests~\cite{zhao2023static,huang2025unlearn,chen2025fedmua}. 
In this work, we investigate the adverse side effects of unlearning in the graph domain and examine how these effects can be amplified to launch adversarial attacks. A more detailed comparison between our work and related attack settings can be found in Section~\ref{sec:formal_prob_dfn} and \AppendixName~\ref{sec:discussion}.

\section{Preliminaries} \label{sec:prelim}

In this section, we present the notations used in this paper and give preliminaries on node classification and graph unlearning. 

{\bf Notations.} 
Bold uppercase letters (e.g., $\Xb$) denote matrices, bold lowercase letters (e.g., $\xb$) denote column vectors, and regular letters (e.g., $a$) denote scalars. We use $\boldsymbol{1}_n$ to denote column vectors with $n$ elements valued 1. 
A graph is denoted by $\Gcal := (\Ab, \Xb)$, where $\Ab \in \{0,1\}^{n \times n}$ is the adjacency matrix and $\Xb := [\xb_1, \ldots, \xb_n]^\top \in \R^{n \times d}$ is the node feature matrix. 
We define the node set $\Vcal := \{v_1, \ldots, v_n\}$ and the edge set $\Ecal \subseteq \Vcal \times \Vcal$, which are represented by the adjacency matrix $\Ab$, where $A_{i,j} = 1$ if $(v_i, v_j) \in \Ecal$ and $A_{i,j} = 0$ otherwise. 
We denote by $\Db \in \R^{n \times n}$ the degree matrix, where each diagonal element is $D_{i,i} := \sum_{j=1}^n A_{i,j}$ and all off-diagonal elements are zero. 

{\bf Semi-supervised Node Classification.} 
In this paper, we study a semi-supervised node classification problem in a transductive setting, which is widely adopted in real-world graph learning benchmarks~\cite{kipf2017semi,wu2020comprehensive}. 
In this setting, the input graph $\Gcal$ contains a subset of labeled training nodes $\Vcal_L := \{u_1, \ldots, u_{|\Vcal_L|}\} \subseteq \Vcal$, where each node $u_i$ is assigned a label $y_i \in \Ycal$. 
The remaining unlabeled test nodes form the subset $\Vcal_U$, with $\Vcal_U \cap \Vcal_L = \emptyset$. 
We denote the GNN prediction for a target node $v \in \Vcal$ as $f_v(\Ab, \Xb; \thetab)$, where $\Ab$ is the adjacency matrix used for message passing, $\Xb$ represents the node features, and $\thetab$ denotes the model parameters.

{\bf Graph Unlearning}. 
Graph unlearning aims to efficiently remove the influence of undesirable data from a trained GNN without full retraining~\cite{chen2022graph,wu2023gif}. Let the original training graph be $\Gcal_{\mathrm{orig}} = (\Ab_{\mathrm{orig}}, \Xb_{\mathrm{orig}})$, containing both retained and undesirable data. 
The original model parameters $\thetab_{\mathrm{orig}}$ are learned by minimizing the supervised loss over the labeled nodes:
\begin{align*}
    \thetab_{\mathrm{orig}} 
    := \argmin_{\thetab} 
    \sum_{v \in \Vcal_L} 
    \Lcal(f_v(\Ab_{\mathrm{orig}}, \Xb_{\mathrm{orig}}; \thetab), y_v),
\end{align*}
where $\Vcal_L$ is the set of labeled nodes, and $\Lcal(\cdot, \cdot)$ is a classification loss function such as cross-entropy~\cite{zhang2018generalized}.
 
The undesirable portion of $\Gcal_{\mathrm{orig}}$ is denoted by a subgraph $\Delta\Gcal = (\Delta\Vcal, \Delta\Ecal)$, where $\Delta\Vcal \subseteq \Vcal_{\mathrm{orig}}$ and $\Delta\Ecal \subseteq \Ecal_{\mathrm{orig}}$. 
After removing this undesirable subgraph, the remaining (cleaned) graph is $\Gcal_{\mathrm{un}} := (\Ab_{\mathrm{un}}, \Xb_{\mathrm{un}})$, with $\Vcal_{\mathrm{un}}:=\mathcal{V}_{\mathrm{orig}} \setminus \Delta \mathcal{V}$ and $\Ecal_{\mathrm{un}} := \Ecal_{\mathrm{orig}} \setminus \Delta\Ecal$. The objective of graph unlearning is to efficiently compute updated parameters $\thetab_{\mathrm{un}}$ through an unlearning algorithm $\textsc{Unlearn}$ (e.g., gradient ascent~\cite{wu2024graphguard,zhang2025catastrophic} or influence function–based methods~\cite{wu2023gif,wu2023certified}), such that $\thetab_{\mathrm{un}}$ approximates the parameters $\thetab_{\mathrm{retrain}}$ obtained by retraining on the cleaned graph $\Gcal_{\mathrm{un}}$, but at a substantially lower computational cost. 
Formally, the unlearning process can be expressed as: 
\begin{align*}
    \thetab_{\mathrm{un}} := & ~ \textsc{Unlearn}(f, \thetab_{\mathrm{orig}}, \mathcal{G}_{\mathrm{orig}}, \Delta\mathcal{G}) \notag \\
    \approx & ~ \underbrace{\mathop{\arg\min}_{\thetab} \sum_{v \in \mathcal{V}_L / \Delta\mathcal{V}} \Lcal(f_v(\Ab_{\mathrm{un}}, \Xb_{\mathrm{un}}; \thetab), y_v)}_{:=\thetab_{\mathrm{retrain}}},
\end{align*}
where the minimization process corresponds to retraining from scratch on the graph after removing $\Delta\Gcal$.

In this work, we focus on the {\bf node unlearning} setting~\cite{wu2023certified}, where $\Delta\Vcal \neq \emptyset$ and $\Delta\Ecal = \{(v_i, v_j) : v_i \in \Delta\Vcal \text{ or } v_j \in \Delta\Vcal\}$. 
This setting captures realistic scenarios such as user account deletion requests in social networks or recommender systems.

\section{Problem Formulation} \label{sec:problem}

In this section, we first present the threat model of the proposed GNN unlearning corruption attack and then formalize the corresponding problem definition.

\subsection{Threat Model} \label{sec:threat_model}

{\bf Attacker's Capability.} 
The attacker has access to the original training graph $\Gcal := (\Ab, \Xb)$ and is allowed to inject a small set of new nodes $\Delta\Vcal$ into it, resulting in an injected graph $\wt{\Gcal} := (\wt{\Ab}, \wt{\Xb})$. 
During the injection phase, the attacker can modify the features, edges, and labels of the injected nodes within a constrained budget. 
After the model owner trains a GNN model $f(\cdot, \cdot; \thetab_{\mathrm{inj}})$ on $\wt{\Gcal}$, the attacker can request the unlearning of the injected nodes $\Delta\Vcal$, producing an unlearned model $f(\cdot, \cdot; \thetab_{\mathrm{un}})$. 

We consider a {\bf black-box} and {\bf target-model-free} setting, where the attacker has no direct access to the target model architecture $f$, the parameters after node injection or unlearned parameters $\thetab_{\mathrm{inj}}$ and $\thetab_{\mathrm{un}}$, or the model outputs. 
The attacker is only aware that the model is a GNN and that a graph unlearning process is applied. 
To simulate this behavior, the attacker may train a surrogate model $\ov{f}$ and perform unlearning on it. 
Although $\ov{f}$ may differ from the target model in architecture, training, or unlearning details, it is assumed to share similar learning behavior with $f$.

{\bf Attacker's Goal.}
The attacker aims to induce unlearning-time performance degradation while keeping the attack stealthy.
Concretely, after the model owner trains on $\widetilde{\Gcal}$ and then performs unlearning, the attacker seeks:
\begin{itemize}[leftmargin=0.05\linewidth]
    \item {\bf (G1) Post-unlearning damage.} After unlearning $\Delta\Vcal$, the final unlearned model $f(\cdot,\cdot;\thetab_{\mathrm{un}})$ suffers a large performance drop on the unlabeled nodes (i.e., $\Vcal_U$).
    \item {\bf (G2) Pre-unlearning utility.} Before any unlearning, the injected model $f(\cdot,\cdot;\thetab_{\mathrm{inj}})$ maintains near-normal predictive performance (e.g., training/validation accuracy), so the negative effect of node injection is not easily detected.
    \item {\bf (G3) Stealthiness under benign unlearning.} Benign-only unlearning requests (i.e., requests that do not involve $\Delta\Vcal$) should not cause a significant additional performance drop beyond the natural effect of benign unlearning. 
\end{itemize}

{\bf Discussion.} 
Our threat-model assumptions are practical and mild. Concretely, we have: 
(i) \textbf{Access to $\Gcal$.} Many applications expose large portions of the graph (e.g., social or DeFi networks) via public APIs or crawlers, making acquisition of $\Gcal$ realistic~\cite{aliapoulios2021large,nielsen2022mumin,weber2019anti,chai2023towards}.  
(ii) \textbf{Feasibility of injection.} Injecting new nodes $\Delta\Vcal$ and manipulating their features and edges is feasible in many real-world systems. For instance, new accounts in social networks correspond to new nodes, editing profiles influence node features, and follows or friend requests create edges.  
(iii) \textbf{Unlearning injected nodes.} Data-removal or account-deletion mechanisms impose legally mandatory requests on model owners to remove user data and trigger unlearning, which is mandated by ``the right to be forgotten''~\cite{gdpr,ccpa,pipeda}. 
(iv) \textbf{Gray-box surrogate modeling.} Knowing only that a GNN and unlearning are used is a weak assumption, as the attacker has no access to the target model’s internals or outputs and can only train a surrogate $\ov{f}$ to approximate its learning and unlearning dynamics. Transferability of such attacks among unlearning methods is examined in Section~\ref{sec:comparison}. Together, these mild assumptions make the inject-then-unlearn attack both feasible and relevant in real-world graph-learning deployments.

\subsection{Graph Unlearning Corruption Attack} \label{sec:formal_prob_dfn}

Based on the threat model in Section~\ref{sec:threat_model}, we now formalize the graph unlearning corruption attack. We first define how an adversary injects nodes into the original graph, then state the attack objective as an optimization problem that chooses injected features and edges to maximize post-unlearning damage to the model.

\begin{definition}[Injected Graph]\label{dfn:inj_graph}
Let $\Gcal:= (\Ab, \Xb)$ be the original graph before node injection, with adjacency matrix $\Ab \in \{0,1\}^{n \times n}$ and node features $\Xb \in \R^{n\times d}$. The adversary injects a set of $m$ nodes $\Vcal$ (i.e., $|\Delta \Vcal| = m$), producing the injected graph $\wt{\Gcal} := (\wt{\Ab}, \wt{\Xb})$, such that 
\begin{align*}
\wt{\Xb} := & ~ 
\begin{bmatrix}
    \Xb \\ \Xb_{\mathrm{inj}}
\end{bmatrix}
\in\mathbb{R}^{(n+m) \times d}, \\ 
\wt{\Ab} := & ~
\begin{bmatrix}
    \Ab & \Ab_{\mathrm{inter}}^\top \\
    \Ab_{\mathrm{inter}} & \Ab_{\mathrm{intra}}
\end{bmatrix}
\in\{0,1\}^{(n+m)\times(n+m)}, 
\end{align*}
where $\Xb_{\mathrm{inj}}\in\mathbb{R}^{m \times d}$ denotes the injected node features, 
$\Ab_{\mathrm{intra}} \in \{0,1\}^{m\times m}$ encodes edges among the injected nodes, and $\Ab_{\mathrm{inter}} \in \{0,1\}^{m\times n}$ encodes edges between injected and original nodes. 
\end{definition}

In this paper, we assume that all injected nodes $\Delta\Vcal$ are labeled and are included in the training of the GNN on the injected graph $\wt{\Gcal}$, i.e., if $\wt{\Vcal}_L$ denotes the set of labeled nodes on $\wt{\Gcal}$, then $\Delta\Vcal \subseteq \wt{\Vcal}_L$. This assumption reflects practical scenarios in user-centric platforms (e.g., social networks, transaction networks) where graph data is derived from user-provided content. An adversary creating fake accounts (i.e., injected nodes) has full control over their user profiles, enabling them to assign arbitrary labels (e.g., self-declared interests or account types) and ensuring these nodes are integrated into the training set.

We then formalize the attack as an optimization problem, where the adversary crafts injected $\Xb_{\mathrm{inj}}$, $\Ab_{\mathrm{inter}}$, and $\Ab_{\mathrm{intra}}$ such that, after unlearning $\Delta\Vcal$, the model’s predictions on the remaining unlabeled nodes $\Vcal_U$ are maximally degraded.

%\newpage 

\begin{definition}[Graph Unlearning Corruption Attack]\label{dfn:goal_of_attack}
Let the original graph $\Gcal$ and the injected graph $\wt{\Gcal}$ be defined as in Definition~\ref{dfn:inj_graph}.
Let $f(\cdot,\cdot;\thetab_{\mathrm{inj}})$ denote a GNN trained on $\wt{\Gcal}$, and let integer $B \ge 0$ be the edge budget.
Let $\thetab_{\mathrm{un}}^{\Delta\Vcal} := \textsc{Unlearn}(f, \thetab_{\mathrm{inj}}, \wt{\Gcal}, \Delta\Vcal)$ denote the model parameters after unlearning $\Delta\Vcal$.
Let $\thetab_{\mathrm{un}}^{\Vcal_0} := \textsc{Unlearn}(f, \thetab_{\mathrm{inj}}, \wt{\Gcal}, \Vcal_0)$ denote the model parameters after unlearning a subset of training nodes $\Vcal_0 \subseteq \Vcal$ such that $\Vcal_0 \cap \Delta\Vcal = \emptyset$. 
The adversary’s objective is to solve the following maximization problem:  
\begin{align}
\max_{\Xb_{\mathrm{inj}}, \Ab_{\mathrm{inter}}, \Ab_{\mathrm{intra}}}\quad
& \sum_{v \in \Vcal_U}\Lcal(f_v(\wt{\Ab}, \wt{\Xb}; \thetab^{\Delta\Vcal}_{\mathrm{un}}), y_v) \label{eq:goal_obj}\\
\mathrm{s.t.}\quad
& \Ab_{\mathrm{inter}} \cdot \boldsymbol{1}_n + \Ab_{\mathrm{intra}} \cdot \boldsymbol{1}_m \le B \label{eq:goal_edge_budget}\\
& \sum_{v \in \Vcal_U}\Lcal(f_v(\wt{\Ab}, \wt{\Xb}; \thetab^{\Vcal_0}_{\mathrm{un}}), y_v)
\le \delta
\label{eq:goal_benign}
\end{align}
where $\Lcal(\cdot, \cdot)$ is an arbitrary classification loss function.

\end{definition}

In Definition~\ref{dfn:goal_of_attack}, the adversary optimizes the injected features $\Xb_{\mathrm{inj}}$ and edges $(\Ab_{\mathrm{inter}},\Ab_{\mathrm{intra}})$ to satisfy the three attacker's goals (G1--G3) in Section~\ref{sec:threat_model}.
The objective in Eq.~\eqref{eq:goal_obj} maximizes the loss on the unlabeled nodes $\Vcal_U$ after unlearning the injected nodes $\Delta\Vcal$, capturing post-unlearning damage (G1).
The budget constraint in Eq.~\eqref{eq:goal_edge_budget} limits the number of injected edges by $B$, enforcing stealthiness during injection (G2), which reflects practical settings where newly created accounts typically form only a small number of connections, and creating unusually many edges may trigger bot- or fraud-detection mechanisms.
Finally, Eq.~\eqref{eq:goal_benign} upper bounds the loss on $\Vcal_U$ after unlearning a benign node subset $\Vcal_0$ that excludes $\Delta\Vcal$, ensuring that benign-only unlearning requests do not cause a significant performance drop (G3).

Our unlearning corruption attack is qualitatively different from prevalent graph adversarial attack paradigms. We highlight the key distinctions below. Additional discussions on how our attack setting differs from prior unlearning-induced adversarial attacks are provided in \AppendixName~\ref{sec:discussion}.

\begin{remark}[Difference to poisoning attacks]
Poisoning attacks seek direct model performance degradation of the loss in Eq.~\eqref{eq:goal_obj} with respect to $\thetab_{\mathrm{inj}}$, without involving the unlearning step in Definition~\ref{dfn:goal_of_attack}. Equivalently, when the unlearning operator is the identity, i.e. $\textsc{Unlearn}(f,\thetab_{\mathrm{inj}},\widetilde{\Gcal},\Delta\Vcal)=\thetab_{\mathrm{inj}}$, 
Definition~\ref{dfn:goal_of_attack} reduces to the standard poisoning formulation. By contrast, our objective explicitly depends on the post-unlearning parameters $\thetab_{\mathrm{un}}$ returned by the $\textsc{Unlearn}$ procedure. Hence, injected nodes can be benign under $\thetab_{\mathrm{inj}}$ but induce large degradation only after unlearning, which makes our attack substantially more stealthy.
\end{remark}

\begin{remark}[Difference to Backdoor Attacks]
    Backdoor attacks implant a trigger so that the GNN model behaves abnormally when the trigger appears at inference (e.g., a specific subgraph or feature pattern). Such attacks can often be mitigated by modifying model inputs. In contrast, our unlearning corruption attacks do not depend on any inference-time trigger. Instead, they exploit the unlearning process itself, which is legally mandated by the right to be forgotten and thus difficult for model owners to reject.
\end{remark}
\section{Proposed Method} \label{sec:proposed}

Directly solving Definition~\ref{dfn:goal_of_attack} is difficult in practice for two reasons. 
First, the unlearning operator $\textsc{Unlearn}(\cdot)$ is treated as a black box, so gradients with respect to the injected features and edges are unavailable.
Second, the attacker does not observe ground-truth labels on the unlabeled set $\Vcal_U$ and thus cannot directly optimize the loss in Eq.~\eqref{eq:goal_obj} and address the constraint in Eq.~\eqref{eq:goal_benign}. 

In this section, we address these two challenges consecutively in Section~\ref{sec:approx_unlearn} and Section~\ref{sec:surrogate_obj}. We then describe the optimization process of our attack in Section~\ref{sec:optim_atk} and discuss scalability concerns in Section~\ref{sec:scale}.

\subsection{Approximating Black-box Unlearning} \label{sec:approx_unlearn}
%{\bf Approximating Black-box Unlearning.} 
To address the first challenge on the gradient computation of the unlearning operator $\textsc{Unlearn}(\cdot)$, we adopt a standard gradient-based formulation of the unlearning process~\cite{wu2023gif,wu2023certified,wu2024graphguard}, which updates model parameters to retain performance on clean labeled nodes $\Vcal_L\setminus\Delta\Vcal$ while forgetting the injected nodes $\Delta\Vcal$.
Specifically, we define the following gradient ascent unlearning objective:
\begin{align*}
    & ~ g_\gamma(\wt{\Ab}, \wt{\Xb}, \thetab, \Vcal_{\mathrm{un}}) \\
    := & ~ \sum_{v \in \Vcal_L \setminus \Vcal_{\mathrm{un}}} \Lcal(f_v(\wt{\Ab},\wt{\Xb};\thetab),y_v) 
    - \gamma \sum_{v \in \Vcal_{\mathrm{un}}}\Lcal(f_v(\wt{\Ab},\wt{\Xb};\thetab),y_v),
\end{align*}
where $\gamma>0$ balances retention and forgetting, and $\Vcal_{\mathrm{un}}$ denotes the node subset to be unlearned, which can be $\Delta \Vcal$ (for attack) or $\Vcal_0$ (for stealthiness) in Definition~\ref{dfn:goal_of_attack}. 

Therefore, the actual unlearned parameter $\thetab_{\mathrm{un}}^{\Vcal_{\mathrm{un}}}$ is the solution obtained via an arbitrary optimization solver $\textsc{Solve}$ initialized at the injected parameters $\thetab_{\mathrm{inj}}$:
\begin{align} \label{eq:theta_un_via_solve}
    \thetab_{\mathrm{un}}^{\Vcal_{\mathrm{un}}} = \textsc{Solve}(g_\gamma(\wt{\Ab}, \wt{\Xb}, \cdot, \Vcal_{\mathrm{un}}), \thetab_{\mathrm{inj}}). 
\end{align}

In this work, we employ a one-step approximation to instantiate the solver $\textsc{Solve}$, which estimates the unlearned parameters by performing a single gradient descent update from the initialization:
\begin{align}\label{eq:solve_via_one_step}
    \textsc{Solve}(g_\gamma, \thetab_{\mathrm{inj}}) \approx \thetab_{\mathrm{inj}} - \eta_{\mathrm{un}} \nabla_{\thetab} g_\gamma(\wt{\Ab}, \wt{\Xb}, \thetab_{\mathrm{inj}}, \Vcal_{\mathrm{un}}).
\end{align}

This approximation has two advantages. First, it requires only a single gradient step, making it efficient to evaluate,  and we empirically verify in Section~\ref{sec:comparison} that it provides a useful surrogate in practice. Second, since the formulation of $\textsc{Solve}$ consists entirely of differentiable operations, we can compute gradients through the unlearning process via standard automatic differentiation (i.e., gradient-of-gradient) in frameworks such as PyTorch, enabling end-to-end optimization of injected features and edges.

\subsection{Surrogate Objective via Pseudo Labels} \label{sec:surrogate_obj}
%{\bf Surrogate Objective via Pseudo Labels.} 
Since the attacker lacks access to the ground-truth labels of the unlabeled set $\Vcal_U$, we cannot directly optimize the threat model's objective (i.e., Eq.~\eqref{eq:goal_obj} and Eq.~\eqref{eq:goal_benign} in Definition~\ref{dfn:goal_of_attack}). 

To address this problem, we adopt a surrogate-modeling approach, where the attacker trains a surrogate GNN model $\ov{f}$ on the injected graph $\wt{\Gcal}$ and uses its predictions to construct pseudo labels $\ov{y}_v$ for $v\in\Vcal_U$. These pseudo-labels serve as a proxy for the victim model's behavior. 

Using the surrogate model $\ov{f}$, we formulate the final surrogate attack objective by combining the malicious goal in Eq.~\eqref{eq:goal_obj} and the stealthiness goal in Eq.~\eqref{eq:goal_benign} as follows: 
\begin{align}
& ~ \Lcal_{\mathrm{atk}}(\wt{\Ab}, \wt{\Xb}, \thetab_{\mathrm{inj}}) 
\notag \\
:=  & ~{\sum_{v\in\Vcal_U} 
\Lcal(\ov{f}_v(\wt{\Ab},\wt{\Xb};\thetab_{\mathrm{un}}^{\Delta\Vcal}), \ov{y}_v)} - \lambda {\sum_{v\in\Vcal_U} 
\Lcal(\ov{f}_v(\wt{\Ab},\wt{\Xb};\thetab^{\Vcal_0}_{\mathrm{un}}), \ov{y}_v)}, 
\label{eq:atk_loss}
\end{align}
where $\lambda \ge 0$ controls the strength of the stealthiness regularization term, $\thetab_{\mathrm{un}}^{\Delta\Vcal}$ represents the parameters after unlearning the injected nodes $\Delta\Vcal$ as in Definition~\ref{dfn:goal_of_attack}, and $\thetab_{\mathrm{un}}^{\Vcal_0}$ represents the parameters after unlearning the benign subset $\Vcal_0$ as defined in Definition~\ref{dfn:goal_of_attack}. Note that both sets of unlearned parameters are approximated via the one-step solver defined in Eq.~\eqref{eq:theta_un_via_solve} and Eq.~\eqref{eq:solve_via_one_step}. 

Considering the final surrogate goal in Eq.~\eqref{eq:atk_loss}, the first term promotes post-unlearning damage after unlearning $\Delta\Vcal$, while the second term discourages significant degradation under benign-only unlearning requests.

\subsection{Optimization} \label{sec:optim_atk}
%{\bf Optimization.} 
With the surrogate objective in Eq.~\eqref{eq:atk_loss}, we obtain a practical optimization problem over the injected features and edges. Specifically, we obtain the following optimization problem:

\begin{definition}[Relaxed problem for graph unlearning corruption]\label{dfn:bilevel_optim}
Let $\Xb_{\mathrm{inj}}\in\mathbb{R}^{m \times d}$, $\Ab_{\mathrm{intra}} \in \{0,1\}^{m\times m}$, and $\Ab_{\mathrm{inter}} \in \{0,1\}^{m\times n}$ be in Definition~\ref{dfn:inj_graph}. Under
the same setting as Definition~\ref{dfn:goal_of_attack} and with $\Lcal_{\mathrm{atk}}$ defined as in Eq.~\eqref{eq:atk_loss}, the adversary solves the following maximization problem: 
\begin{align*}
\max_{\Xb_{\mathrm{inj}}, \Ab_{\mathrm{inter}}, \Ab_{\mathrm{intra}}}\quad
& \Lcal_{\mathrm{atk}}(\wt{\Ab}, \wt{\Xb}, \thetab_{\mathrm{inj}}) \\
\mathrm{s.t.}\quad
& \Ab_{\mathrm{inter}} \cdot \boldsymbol{1}_n + \Ab_{\mathrm{intra}} \cdot \boldsymbol{1}_m \le B.
\end{align*}
\end{definition}

However, we cannot simply perform the optimization, since the injected parts for the adjacency matrix $\Ab_{\mathrm{inter}}$ and $\Ab_{\mathrm{intra}}$ are discrete, which is hard to optimize with simple gradient methods. To this end, we re-parameterize the edges using unconstrained free variables $\ov{\Ab}_{\mathrm{inter}} \in \mathbb{R}^{m \times n}$ and $\ov{\Ab}_{\mathrm{intra}} \in \mathbb{R}^{m \times m}$. We map these variables to the continuous range $[0, 1]$ via the sigmoid function, i.e., $\Ab = \mathrm{sigmoid}(\ov{\Ab})$. 
After optimizing these continuous variables, we project the final solution onto the discrete feasible set: 
%\begin{align*}
$
\mathcal{C} := \{ (\Ab_{\mathrm{inter}}, \Ab_{\mathrm{intra}}): \Ab_{\mathrm{intra}} \in \{0,1\}^{m\times m}, \Ab_{\mathrm{inter}} \in \{0,1\}^{m\times n},  \Ab_{\mathrm{inter}} \cdot \boldsymbol{1}_n + \Ab_{\mathrm{intra}} \cdot \boldsymbol{1}_m \le B \}
$. 
%.\end{align*}
We first derive the optimal projection onto the constraint set $\Ccal$ in Appendix~\ref{sec:proof}, and then present the detailed optimization procedure in Algorithm~\ref{alg:pgd}.

Notably, our method differs from standard Projected Gradient Descent (PGD), which enforces discrete constraints at every step. Applying discrete projection iteratively typically leads to a vanishing gradient problem, where small accumulated updates are zeroed out by rounding. For example, if an edge $\Ab_{i,j}$ is initialized at $0$, a gradient update might increase it to $0.2$, but the subsequent discrete projection would round it back to $0$, effectively erasing the update. Instead, our sigmoid-based reparameterization allows gradients to accumulate in the unconstrained space $\mathbb{R}$ throughout the optimization, and we apply a hard projection only at the final stage. 

\begin{algorithm}[!t]
\caption{Optimization Process for Graph Unlearning Corruption Attack}
\label{alg:pgd}
\begin{algorithmic}[1]
\Require Original graph $(\Ab,\Xb)$, injected parameters $\thetab_{\mathrm{inj}}$, budget $B$, learning rates $\eta_x, \eta_a$, steps $T$
\Ensure Injected node features $\Xb_{\mathrm{inj}}$, discrete injected edges $\Ab_{\mathrm{inter}},\Ab_{\mathrm{intra}}$
\State Initialize $\Xb_{\mathrm{inj}}\in\mathbb{R}^{m\times d}$ randomly
\State Initialize  $\ov{\Ab}_{\mathrm{inter}}\in\mathbb{R}^{m\times n}, \ov{\Ab}_{\mathrm{intra}}\in\mathbb{R}^{m\times m}$ randomly
\For{$t=1,\dots,T$}
    \State \emph{// Map to continuous range [0, 1]}
    \State $\Ab_{\mathrm{inter}} \leftarrow \mathrm{sigmoid}(\ov{\Ab}_{\mathrm{inter}})$; \quad $\Ab_{\mathrm{intra}} \leftarrow \mathrm{sigmoid}(\ov{\Ab}_{\mathrm{intra}})$
    \State Build $\wt{\Ab}=\begin{bmatrix}\Ab & \Ab_{\mathrm{inter}}^\top \\ \Ab_{\mathrm{inter}} & \Ab_{\mathrm{intra}}\end{bmatrix}$ and $\wt{\Xb}=\begin{bmatrix}\Xb \\ \Xb_{\mathrm{inj}}\end{bmatrix}$
    \State Evaluate attack loss $\Lcal_{\mathrm{atk}}$ (Eq.~\ref{eq:atk_loss}) via unrolled approximation
    \State Update $\Xb_{\mathrm{inj}}\leftarrow \Xb_{\mathrm{inj}}+\eta_x\nabla_{\Xb_{\mathrm{inj}}} \Lcal_{\mathrm{atk}}$
    \State Update $\ov{\Ab}_{\mathrm{inter}}\leftarrow \ov{\Ab}_{\mathrm{inter}}+\eta_a \nabla_{\ov{\Ab}_{\mathrm{inter}}} \Lcal_{\mathrm{atk}}$
    \State Update $\ov{\Ab}_{\mathrm{intra}}\leftarrow \ov{\Ab}_{\mathrm{intra}}+\eta_a \nabla_{\ov{\Ab}_{\mathrm{intra}}} \Lcal_{\mathrm{atk}}$
\EndFor
    \State \emph{// Projection to satisfy the edge budget}
    \State Compute final continuous $\Ab_{\mathrm{inter}} = \mathrm{sigmoid}(\ov{\Ab}_{\mathrm{inter}})$, $\Ab_{\mathrm{intra}} = \mathrm{sigmoid}(\ov{\Ab}_{\mathrm{intra}})$
    \For{each row $i = 1 ... m$ of concatenated $[\Ab_{\mathrm{inter}}, \Ab_{\mathrm{intra}}]$}
        \State Set top-$B$ largest entries to $1$ and set others to $0$
    \EndFor
\State \textbf{return} $\Xb_{\mathrm{inj}},\Ab_{\mathrm{inter}},\Ab_{\mathrm{intra}}$
\end{algorithmic}
\end{algorithm}

\subsection{Scalability} \label{sec:scale}
%{\bf Scalability via Neighborhood Sparsification.} 

We now analyze the computational complexity of the proposed optimization framework and introduce a sparsification strategy to ensure scalability on large graph datasets. Due to space limitations, we defer the detailed time complexity analysis of both the original algorithm and our efficient approximation to Appendix~\ref{sec:time}.

To address potential scalability bottlenecks, 
we exploit the locality of GNN aggregation mechanisms. The influence of the injected nodes $\Delta\Vcal$ on the target victims $\Vcal_U$ is primarily mediated through their local $k$-hop structural neighbors. 
Therefore, optimizing connections to nodes topologically distant from $\Vcal_U$ may yield negligible gradients and minimal impact on the attack objective. 

Based on this intuition, we propose a scalable approximation by restricting the optimization search space. Specifically, we enforce sparsity on the injected adjacency matrix $\wt{\Ab}$ by deactivating entries corresponding to distant nodes.
We define the candidate connection set $\mathcal{S}_{\mathrm{cand}}$ as the union of the $k$-hop neighborhoods of the target nodes:
$
    \mathcal{S}_{\mathrm{cand}} = \bigcup_{v \in \Vcal_U} \mathcal{N}_k(v).
$
We then constrain the optimization of $\ov{\Ab}_{\mathrm{inter}}$ such that $\ov{\Ab}_{\mathrm{inter}}[i, j]$ is trainable only if node $v_j \in \mathcal{S}_{\mathrm{cand}}$, while fixing all other entries to $0$. Additionally, to further reduce complexity, we fix $\ov{\Ab}_{\mathrm{intra}} = \mathbf{0}$, assuming that connections between injected nodes contribute marginally to the attack compared to connections with the victim's neighborhood.

\section{Experiments}\label{sec:exp_setting}

\begin{table*}[!ht]
\caption{\textbf{Main Comparison Results.} Original accuracy, unlearned accuracy, and accuracy drop ($\Delta$Acc) across datasets. Best results are in \textbf{bold}, and second-best are \underline{underlined}. A complete version with standard deviation can be found in Table~\ref{tab:main_new_full} in Appendix~\ref{sec:more_results}.}
\label{tab:main_new}
\vskip -0.1in
\centering
\resizebox{0.98\linewidth}{!}{
\begin{tabular}{llccc|ccc|ccc|ccc}
\toprule[1.5pt]
\multirow{2}{*}{\textbf{\shortstack{Unlearn\\Method}}} & \multirow{2}{*}{\textbf{Attack}} 
  & \multicolumn{3}{c}{\textbf{Cora}} 
  & \multicolumn{3}{c}{\textbf{Citeseer}} 
  & \multicolumn{3}{c}{\textbf{Pubmed}} 
  & \multicolumn{3}{c}{\textbf{Flickr}} \\
\cmidrule(lr){3-5} \cmidrule(lr){6-8} \cmidrule(lr){9-11} \cmidrule(lr){12-14}
 & & Original$\uparrow$ & Unlearned$\downarrow$ & $\Delta$Acc$\downarrow$ 
   & Original$\uparrow$ & Unlearned$\downarrow$ & $\Delta$Acc$\downarrow$ 
   & Original$\uparrow$ & Unlearned$\downarrow$ & $\Delta$Acc$\downarrow$ 
   & Original$\uparrow$ & Unlearned$\downarrow$ & $\Delta$Acc$\downarrow$ \\
\midrule[1pt]
% ==================== GIF ====================
\multirow{10}{*}{\textbf{GIF}}
 & NoAttack & $0.8502$ & $0.8458$ & $0.0044$ & $0.7357$ & $0.7201$ & $0.0156$ & $0.8308$ & $0.8180$ & $0.0129$ & $0.4355$ & $0.4348$ & $0.0007$ \\
 \cmidrule(lr){2-14}
 & Random & $0.8317$ & $0.8435$ & $-0.0118$ & $0.7201$ & $0.7363$ & $-0.0162$ & $0.8267$ & $0.7781$ & $0.0486$ & $0.4355$ & $0.4345$ & $0.0010$ \\
 & Copy & $0.8494$ & $0.8450$ & $0.0044$ & $0.7375$ & $0.7411$ & $-0.0036$ & $0.8308$ & $0.8302$ & $0.0006$ & $0.3648$ & $0.3644$ & $0.0004$ \\
 & NewCopy & $0.8317$ & $0.8435$ & $-0.0118$ & $0.7375$ & $0.7376$ & $-0.0001$ & $0.8362$ & $0.8316$ & $0.0046$ & $0.4438$ & $0.4435$ & $0.0003$ \\
 & TestCopy & $0.8303$ & $0.8295$ & $0.0007$ & $0.7153$ & $0.7243$ & $-0.0090$ & $0.8440$ & $0.7991$ & $0.0449$ & $0.4323$ & $0.3961$ & $\underline{0.0362}$ \\
 & TestLink & $0.7801$ & $0.8332$ & $-0.0531$ & $0.7279$ & $0.7628$ & $-0.0348$ & $0.8165$ & $0.8329$ & $-0.0163$ & $0.4319$ & $0.4317$ & $0.0002$ \\
 \cmidrule(lr){2-14}
 & SBA & $0.8259$ & $0.8233$ & $0.0026$ & $0.7140$ & $0.7134$ & $0.0006$ & $0.8262$ & $0.8261$ & $0.0001$ & $0.4339$ & $0.4220$ & $0.0119$ \\
 & UGBA & $0.8311$ & $0.7091$ & $\underline{0.1220}$ & $0.7173$ & $0.7158$ & $0.0015$ & $0.8247$ & $0.8245$ & $0.0002$ & $0.4208$ & $0.4167$ & $0.0041$ \\
 & TDGIA & $0.7538$ & $0.7231$ & $0.0307$ & $0.6917$ & $0.6556$ & $\underline{0.0361}$ & $0.8181$ & $0.5791$ & $\underline{0.2390}$ & $0.4057$ & $0.4035$ & $0.0022$ \\
 \cmidrule(lr){2-14}
 & \textbf{OptimAttack} & $0.8162$ & $0.6103$ & $\textbf{0.2059}$ & $0.7381$ & $0.2066$ & $\textbf{0.5315}$ & $0.8351$ & $0.2443$ & $\textbf{0.5908}$ & $0.4320$ & $0.2860$ & $\textbf{0.1460}$ \\
\midrule[1pt]

% ==================== CEU ====================
\multirow{10}{*}{\textbf{CEU}}
 & NoAttack & $0.8214$ & $0.8192$ & $0.0022$ & $0.7243$ & $0.7075$ & $0.0168$ & $0.8084$ & $0.7956$ & $0.0128$ & $0.4440$ & $0.4441$ & $-0.0001$ \\
 \cmidrule(lr){2-14}
 & Random & $0.8103$ & $0.8244$ & $-0.0140$ & $0.7195$ & $0.7321$ & $-0.0126$ & $0.7951$ & $0.3849$ & $0.4102$ & $0.4318$ & $0.4316$ & $0.0002$ \\
 & Copy & $0.8221$ & $0.8220$ & $0.0001$ & $0.7195$ & $0.7159$ & $0.0036$ & $0.8049$ & $0.7467$ & $0.0582$ & $0.4432$ & $0.3987$ & $\underline{0.0444}$ \\
 & NewCopy & $0.8089$ & $0.8229$ & $-0.0140$ & $0.7195$ & $0.7207$ & $-0.0012$ & $0.8041$ & $0.7831$ & $0.0210$ & $0.4317$ & $0.4316$ & $0.0001$ \\
 & TestCopy & $0.8155$ & $0.8177$ & $-0.0022$ & $0.7207$ & $0.7205$ & $0.0002$ & $0.8121$ & $0.3869$ & $\underline{0.4252}$ & $0.4420$ & $0.4426$ & $-0.0006$ \\
 & TestLink & $0.7498$ & $0.8185$ & $-0.0686$ & $0.6967$ & $0.7411$ & $-0.0444$ & $0.7438$ & $0.3274$ & $0.4164$ & $0.4322$ & $0.4321$ & $0.0000$ \\
 \cmidrule(lr){2-14}
 & SBA & $0.8037$ & $0.7993$ & $0.0044$ & $0.6941$ & $0.6947$ & $-0.0006$ & $0.7985$ & $0.5682$ & $0.2303$ & $0.4232$ & $0.4230$ & $0.0002$ \\
 & UGBA & $0.8085$ & $0.6909$ & $\underline{0.1176}$ & $0.6926$ & $0.6926$ & $-0.0000$ & $0.7877$ & $0.6562$ & $0.1315$ & $0.4219$ & $0.4218$ & $0.0001$ \\
 & TDGIA & $0.5309$ & $0.5279$ & $0.0030$ & $0.5158$ & $0.4857$ & $\underline{0.0301}$ & $0.6116$ & $0.6229$ & $-0.0114$ & $0.3912$ & $0.3850$ & $0.0062$ \\
 \cmidrule(lr){2-14}
 & \textbf{OptimAttack} & $0.7904$ & $0.5801$ & $\textbf{0.2103}$ & $0.7267$ & $0.2276$ & $\textbf{0.4991}$ & $0.8102$ & $0.2146$ & $\textbf{0.5956}$ & $0.4322$ & $0.2858$ & $\textbf{0.1464}$ \\
\midrule[1pt]

% ==================== GA ====================
\multirow{10}{*}{\textbf{GA}}
 & NoAttack & $0.8443$ & $0.8177$ & $0.0266$ & $0.7417$ & $0.6805$ & $0.0612$ & $0.8320$ & $0.8158$ & $0.0162$ & $0.4461$ & $0.4336$ & $0.0125$ \\
 \cmidrule(lr){2-14}
 & Random & $0.8295$ & $0.5166$ & $\underline{0.3129}$ & $0.7171$ & $0.4613$ & $0.2559$ & $0.8279$ & $0.7041$ & $\underline{0.1238}$ & $0.4433$ & $0.4317$ & $0.0116$ \\
 & Copy & $0.8524$ & $0.8258$ & $0.0266$ & $0.7321$ & $0.7177$ & $0.0144$ & $0.8304$ & $0.8291$ & $0.0013$ & $0.2125$ & $0.4331$ & $-0.2206$ \\
 & NewCopy & $0.8325$ & $0.8635$ & $-0.0310$ & $0.7393$ & $0.6859$ & $0.0535$ & $0.8371$ & $0.8334$ & $0.0038$ & $0.3615$ & $0.4298$ & $-0.0683$ \\
 & TestCopy & $0.8273$ & $0.7845$ & $0.0428$ & $0.7237$ & $0.7117$ & $0.0120$ & $0.8455$ & $0.8092$ & $0.0363$ & $0.4412$ & $0.4014$ & $\underline{0.0398}$ \\
 & TestLink & $0.7786$ & $0.8391$ & $-0.0605$ & $0.7243$ & $0.7435$ & $-0.0192$ & $0.8176$ & $0.8338$ & $-0.0161$ & $0.4054$ & $0.4183$ & $-0.0128$ \\
 \cmidrule(lr){2-14}
 & SBA & $0.8288$ & $0.8348$ & $-0.0059$ & $0.7155$ & $0.6845$ & $0.0310$ & $0.8265$ & $0.8271$ & $-0.0006$ & $0.4339$ & $0.4220$ & $0.0119$ \\
 & UGBA & $0.8322$ & $0.7767$ & $0.0555$ & $0.7152$ & $0.4262$ & $\underline{0.2890}$ & $0.8252$ & $0.8285$ & $-0.0033$ & $0.4208$ & $0.4167$ & $0.0041$ \\
 & TDGIA & $0.7397$ & $0.7512$ & $-0.0115$ & $0.6911$ & $0.6830$ & $0.0081$ & $0.7924$ & $0.7905$ & $0.0019$ & $0.4055$ & $0.4002$ & $0.0053$ \\
 \cmidrule(lr){2-14}
 & \textbf{OptimAttack} & $0.8258$ & $0.4790$ & $\textbf{0.3469}$ & $0.7345$ & $0.4138$ & $\textbf{0.3207}$ & $0.8279$ & $0.6798$ & $\textbf{0.1481}$ & $0.4318$ & $0.3012$ & $\textbf{0.1306}$ \\
\bottomrule[1.5pt]
\end{tabular}
}
\vskip -0.1in
\end{table*}

In this section, we show our experimental setup and report the main empirical results of our study.

\subsection{Experiment Settings}
{\bf Datasets.} We evaluate our attack on four standard graph machine learning benchmarks: Cora~\cite{yang2016revisiting}, Citeseer~\cite{yang2016revisiting}, Pubmed~\cite{yang2016revisiting}, and Flickr~\cite{zeng2020graphsaint}. Following the setup of the well-established graph unlearning method GIF~\cite{wu2023gif}, 90\% of the nodes in each dataset are used for training, and the remaining 10\% for testing.

{\bf Victim Model.} We use a two-layer GCN~\cite{kipf2017semi} as the victim model for node classification. The GCN is implemented with the hyperparameters and training protocol from GIF~\cite{wu2023gif} to ensure consistency and reproducibility. To construct the poisoned training graph, we insert injected nodes equal to 5\% of the total node count, and allocate an edge budget of 5 for each injected node. After training the GCN on the poisoned graph, we apply GIF~\cite{wu2023gif}, CEU~\cite{wu2023certified}, and Gradient Ascent (GA)~\cite{wu2024graphguard} to unlearn the injected nodes, thereby triggering our unlearning-induced adversarial attack. All the unlearning methods are run with their official settings to ensure faithful reproduction.

{\bf Baselines.} As discussed in Section~\ref{sec:formal_prob_dfn}, our proposed attack operates under a fundamentally different setting from existing backdoor and poisoning attacks in general-purpose graph learning, making direct comparison infeasible. Similarly, current unlearning-induced adversarial attacks are not directly comparable due to the novelty of our setup (see \AppendixName~\ref{sec:discussion} for a detailed discussion). Therefore, we compare our optimization-based attack with several intuitive baselines, namely {\bf Random}, {\bf Copy}, {\bf NewCopy}, {\bf TestCopy}, {\bf TestLink}, and we defer their implementation details to Appendix~\ref{sec:impl_detail}. 

Although previous adversarial and backdoor node injection attacks are not specifically designed for our setting, we compare our method against representative GNN node injection attacks, including \textbf{SBA}~\cite{zhang2021backdoor}, \textbf{UGBA}~\cite{dai2023unnoticeable}, and \textbf{TDGIA}~\cite{zou2021tdgia}. We utilize their official implementations to inject labeled nodes into the training graph. To ensure a fair comparison, we align the train-test splits, injection budgets (nodes and edges), and feature normalization with our experimental setup.

For completeness, we also include a baseline that performs standard node unlearning without node injection, denoted as {\bf NoAttack}. This baseline verifies that any observed performance degradation is caused by the attack itself rather than the unlearning process.

{\bf Evaluation Metrics.} To comprehensively evaluate stealthiness and attack effectiveness, we report three metrics: (1) {\it Original Accuracy}, which measures model performance on the injected graph before unlearning to ensure that the attack remains stealthy and does not degrade pre-unlearning utility; (2) {\it Unlearned Accuracy}, which measures performance after the unlearning request is processed; and (3) {\it Accuracy Drop ($\Delta$Acc)}, our primary metric, which quantifies the attack's success by calculating the difference between original and unlearned accuracy (i.e., $\Delta\mathrm{Acc} = \mathrm{Original} - \mathrm{Unlearned}$). A higher $\Delta$Acc indicates a more effective corruption attack.

{\bf Implementation Details.} Due to space limitations, we defer the details of our proposed OptimAttack to Appendix~\ref{sec:impl_detail}. 

\subsection{Unlearning Corruption Attack Performance}\label{sec:comparison}

In this study, we present a comprehensive comparison of our proposed \textbf{OptimAttack} against the baselines described in Section~\ref{sec:exp_setting}. The main results are reported in Table~\ref{tab:main_new}. All experiments were repeated 5 times; for brevity, we report the mean values here, while the full results with standard deviations are provided in Appendix~\ref{sec:more_results}. Our key findings are summarized as follows:
\begin{itemize}[leftmargin=0.05\linewidth]\item \textbf{Significant Post-Unlearning Damage (G1).} Our proposed {OptimAttack} consistently achieves the highest accuracy drop ($\Delta$Acc) across all datasets and unlearning algorithms, successfully satisfying the primary adversarial goal (G1). For instance, on the Pubmed dataset, OptimAttack induces a massive performance collapse (e.g., $\sim$59\% drop under GIF and CEU), whereas heuristic baselines like {Random} or {Copy} often result in negligible or even negative drops. This superiority stems from our bi-level optimization framework: unlike intuitive baselines or standard poisoning attacks (e.g., SBA, UGBA) that are agnostic to the unlearning process, our method explicitly optimizes for post-unlearning degradation. This also validates the effectiveness of our one-step gradient approximation in capturing the dynamics of black-box unlearning operators.

\item \textbf{Stealthiness via Pre-Unlearning Utility (G2).} Regarding the second goal (G2), we observe that the \textit{Original Accuracy} of models poisoned by OptimAttack remains comparable to those under {NoAttack} or other baselines. For example, on Citeseer under the GIF setting, the original accuracy for OptimAttack ($0.7381$) is very close to the clean baseline ($0.7357$). This confirms that our attack is highly stealthy: the injected nodes behave benignly during the initial training phase and do not trigger a performance collapse until the unlearning request is executed.

\begin{table*}[!ht]
\centering
\caption{\textbf{Ablation Study on Stealthiness Regularization.} We compare our full method (``w/ Reg.'') against a variant of Eq.~\eqref{eq:atk_loss} without the stealthiness loss term ($\lambda=0$). \textbf{Benign F1} measures model utility after unlearning a random benign node; a higher value indicates better stealthiness (G3). \textbf{Unlearned F1} measures the attack effectiveness (G1). }
\vskip -0.1in
\label{tab:ablation_stealthiness}
\resizebox{1\textwidth}{!}{
\begin{tabular}{l|ccc|ccc|ccc}
\toprule
\multirow{2}{*}{\textbf{Method}} & \multicolumn{3}{c|}{\textbf{Cora}} & \multicolumn{3}{c|}{\textbf{Citeseer}} & \multicolumn{3}{c}{\textbf{Pubmed}} \\
 & Original F1$\uparrow$ & Unlearned F1$\downarrow$ & Benign F1$\uparrow$ & Original F1$\uparrow$ & Unlearned F1$\downarrow$ & Benign F1$\uparrow$ & Original F1$\uparrow$ & Unlearned F1$\downarrow$ & Benign F1$\uparrow$ \\
\midrule
w/o Reg. ($\lambda=0$) & $0.8163 \pm 0.0045$ & $0.5877 \pm 0.0480$ & $0.7581 \pm 0.0035$ & $0.7380 \pm 0.0059$ & $0.2059 \pm 0.0023$ & $0.6517 \pm 0.0043$ & $0.8349 \pm 0.0007$ & $0.2513 \pm 0.1236$ & $0.7877 \pm 0.0012$ \\
w/ Reg. (Ours) & $0.8162 \pm 0.0043$ & $0.6103 \pm 0.0600$ & $\mathbf{0.8053 \pm 0.0027}$ & $0.7381 \pm 0.0064$ & $0.2066 \pm 0.0029$ & $\mathbf{0.7286 \pm 0.0043}$ & $0.8351 \pm 0.0009$ & $0.2443 \pm 0.1372$ & $\mathbf{0.8303 \pm 0.0013}$ \\
\bottomrule
\end{tabular}
}
\vskip -0.1in
\end{table*}

\begin{figure*}[!ht]
    \centering
    \vskip -0.15in
    \begin{subfigure}{0.28\linewidth}
        \centering
        \includegraphics[width=\linewidth]{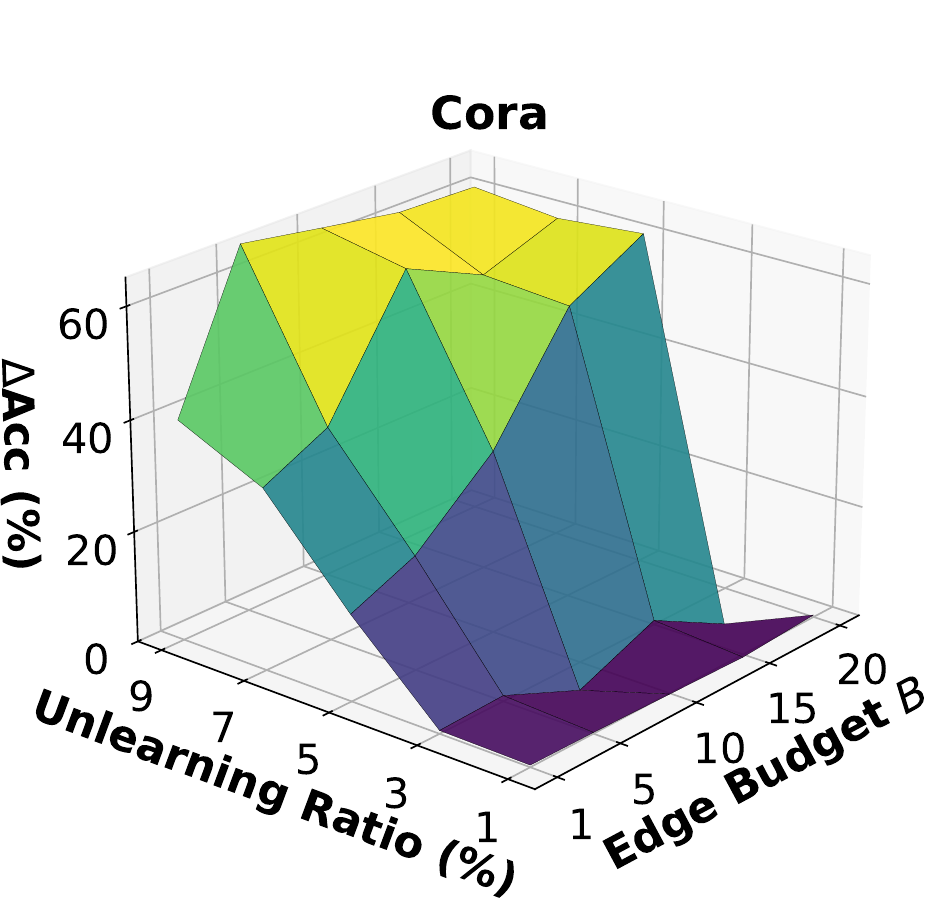}
        %\caption{Cora}
        \label{fig:budget_cora}
    \end{subfigure}
    \hfill
    \begin{subfigure}{0.28\linewidth}
        \centering
        \includegraphics[width=\linewidth]{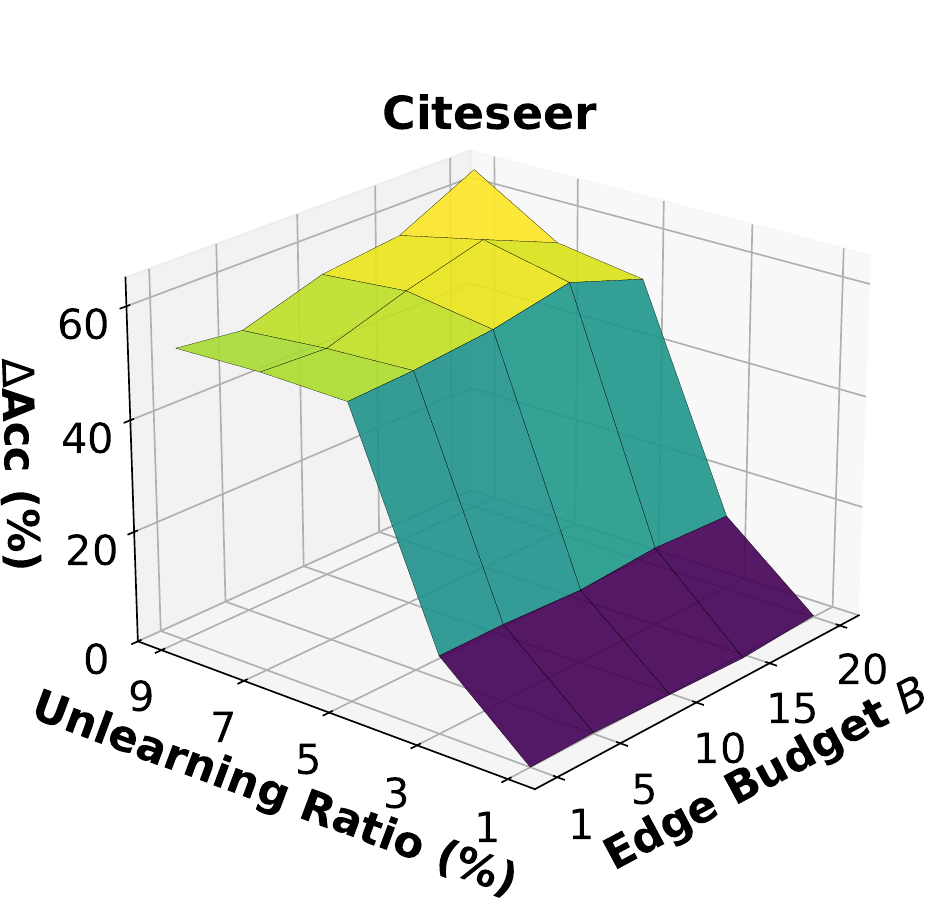}
        %\caption{Citeseer}
        \label{fig:budget_citeseer}
    \end{subfigure}
    \hfill
    \begin{subfigure}{0.28\linewidth}
        \centering
        \includegraphics[width=\linewidth]{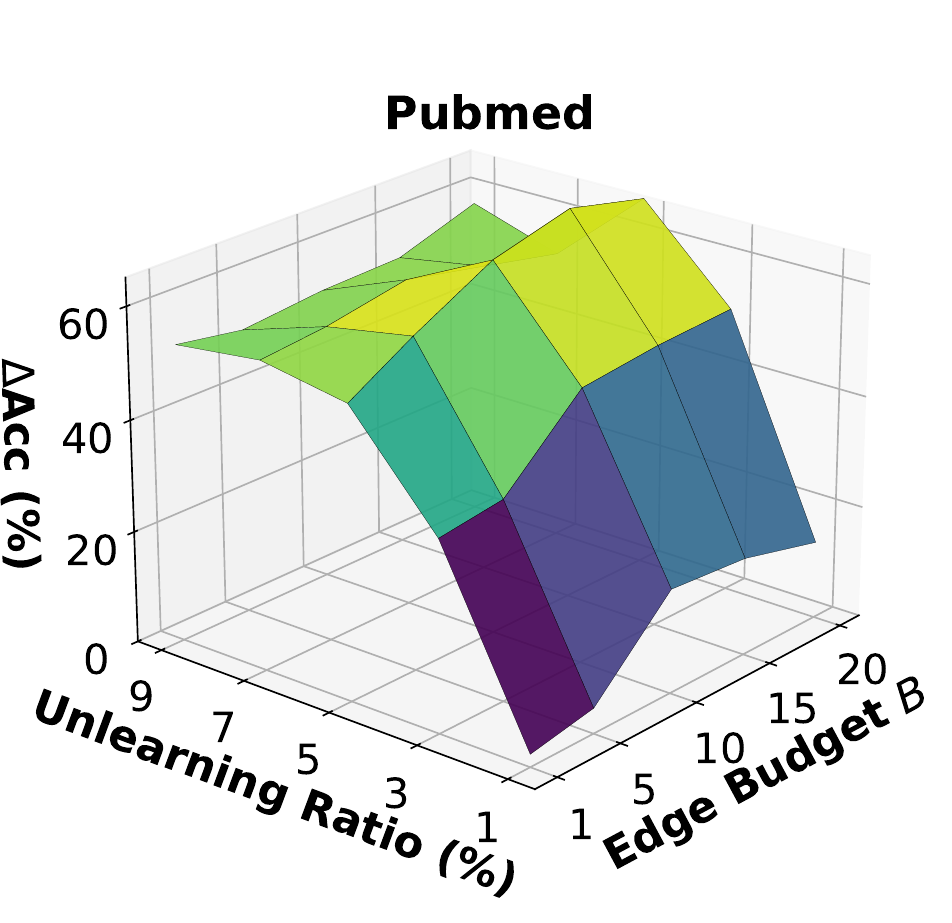}
        %\caption{Pubmed}
        \label{fig:budget_pubmed}
    \end{subfigure}
    \vskip -0.26in
    
\caption{\textbf{Impact of Attack Budget.} The attack performance ($\Delta$Acc) of OptimAttack across varying edge budgets ($B$) and unlearning ratios.}    
\vskip -0.15in
\label{fig:budget}
\end{figure*}

\item \textbf{Transferability Across Unlearning Mechanisms.} Our method demonstrates strong transferability across diverse unlearning algorithms. Although we utilize Gradient Ascent (GA) as the unlearning mechanism for the surrogate model during optimization, the generated attack vectors effectively degrade models unlearned via GIF (influence functions) and CEU (certified unlearning). This indicates that our surrogate modeling strategy effectively captures the general vulnerabilities of the unlearning problem, allowing the attack to succeed even when the victim's specific unlearning algorithm is unknown.
\item \textbf{Scalability via Neighborhood Restriction.} On the large-scale Flickr dataset, we employed a scalable approximation strategy that optimizes connections to only a random 10\% subset of test nodes and their immediate 1-hop neighbors. Despite this restricted search space, {OptimAttack} still achieves a substantial accuracy drop ($\sim$14.6\%), significantly outperforming baselines. This result suggests that our neighborhood restriction framework is highly efficient and capable of scaling to larger graphs. While these results demonstrate potential scalability, we emphasize that the primary contribution of this work is to establish the unlearning corruption attack surface itself, rather than optimizing for extreme scalability.
\end{itemize}

\subsection{Ablation Studies}

{\bf Impact of Unlearning Benign Nodes (G3).} To verify that our proposed loss function effectively addresses the goal of stealthiness under benign unlearning (G3), we conducted an ablation study by removing the second term in Eq.~(6) (i.e., setting $\lambda=0$). The results, presented in Table~\ref{tab:ablation_stealthiness}, demonstrate the critical role of this regularization. Without it (w/o Reg.''), the model suffers significant collateral damage when unlearning benign nodes: for instance, on Citeseer, the \textit{Benign F1} drops sharply to $0.6517$, compared to the original accuracy of $0.7380$. This indicates that without explicit constraints, the corruption attack destabilizes the model globally. In contrast, our full method (w/ Reg.'') maintains a \textit{Benign F1} of $0.7286$, almost recovering the original performance. Crucially, this improvement in stealthiness comes with negligible cost to attack effectiveness (G1), as the \textit{Unlearned F1} remains comparably low across all datasets. This confirms that the regularization term successfully isolates the corruption to the target unlearning request, satisfying G3.

\textbf{Impact of Attack Budget.} In Figure~\ref{fig:budget}, we evaluate how the performance degradation ($\Delta$Acc) of the proposed OptimAttack changes under varying attack budgets, specifically the node unlearning ratio and the edge injection budget $B$ per node. We evaluate unlearning ratios of $\{1\%, 3\%, 5\%, 7\%, 9\%\}$ and edge budgets of $\{1, 5, 10, 15, 20\}$, where $5\%$ and $B=5$ represent our default experimental setting. As shown in the figure, OptimAttack is highly stable and resource-efficient. While the damage ($\Delta$Acc) naturally increases as the attacker is granted more budget, the attack effectiveness quickly plateaus. Notably, at our strictly limited default setting (a $5\%$ unlearning ratio and just $5$ injected edges), the attack already achieves near-maximum performance degradation on all datasets.

{\bf Victim Model Transferrability.} We evaluate whether our unlearning corruption attack, optimized using a surrogate 2-layer GCN, can transfer to other GNN architectures. Due to space limitations, the results are deferred to Figure~\ref{fig:model_transferability} in Appendix~\ref{sec:more_results}. 
\section{Conclusion}

In this paper, we introduced the concept of {unlearning corruption attacks}, a novel adversarial setting where legally mandated data deletion requests are weaponized to degrade GNN performance. We formalized this threat through three key adversarial goals: (G1) maximizing post-unlearning damage, (G2) maintaining pre-unlearning utility, and (G3) ensuring stealthiness under benign unlearning requests. To achieve these goals, we proposed a bi-level optimization framework that overcomes the technical challenges of black-box unlearning and label scarcity via gradient-based approximations and surrogate modeling with pseudo-labels. While extreme scalability was not the primary objective of this study, we introduced an intuitive neighborhood restriction approach. In future work, we plan to refine this approximation to investigate scalable attack strategies for billion-scale graphs. %Additionally, we aim to explore the generalizability of these unlearning vulnerabilities to the emerging paradigm of Graph Foundation Models.

%\newpage
\ifdefined\isarxiv

\else

\clearpage
\bibliographystyle{ACM-Reference-Format}
\bibliography{ref}
\fi

%%
%% If your work has an appendix, this is the place to put it.

\appendix
\newpage

\twocolumn[
\begin{center}
    {\LARGE \bf Appendix}
\end{center}
%\vskip 0.1in
]

\section*{List of Contents}
In this appendix, we provide the following additional information: 

\begin{itemize}[leftmargin=0.05\linewidth]
    \applist
\end{itemize}

\appsection{Discussion}{sec:discussion}

In this section, we discuss the differences between our attack settings and previous works. 

%\subsection{Comparison to Other Unlearning-induced Attacks}

{\bf Degradation Amplification Attacks.} A broad class of attacks~\cite{zhao2023static,chen2025fedmua,huang2025unlearn} aim to submit adversarial unlearning requests to machine learning services, intentionally causing the unlearned model to suffer significant performance degradation. A more stealthy variant of these attacks, known as unlearning-induced backdoors~\cite{liu2024backdoor,ren2025keeping}, manipulates the model such that its performance degrades only when a specific trigger appears in the input (e.g., certain pixels in images~\cite{liu2024backdoor} or sentences with abnormal words or typos in text~\cite{ren2025keeping}).  

However, these attacks have been developed in image~\cite{zhao2023static,chen2025fedmua,liu2024backdoor,huang2025unlearn} or language~\cite{ren2025keeping} domains, where a key assumption holds: {\it the unlearning request does not need to exactly match any original training sample.} For example, one may train a model on images of cats and dogs but later request unlearning for an image of a tiger, or train on the text of \textit{Harry Potter} and request unlearning for a new sentence not found in the book. This assumption does not naturally hold in graph learning, where unlearning must specify existing nodes or edges in the training graph~\cite{wu2023gif,cheng2023gnndelete,wu2023certified,wu2024graphguard}. For instance, in a social network, an adversary can only request the deletion of accounts they control, instead of arbitrary ones such as celebrity accounts. Consequently, these previous attacks are not directly applicable in the context of graph learning.

\begin{remark}[Difference to Degradation Amplification Attacks]
These attacks assume the flexibility to submit arbitrary unlearning requests (e.g., data points not present in the training set), whereas graph learning restricts unlearning to existing graph elements (i.e., nodes or edges). Our inject-then-unlearn setting thus represents a more realistic adversarial scenario for graphs, and these previous attacks are not directly comparable to ours.
\end{remark}

{\bf Camouflaged Poisoning Attacks.} Camouflaged poisoning attacks~\cite{di2023hidden,huang2024uba} follow an inject-then-unlearn paradigm similar to ours. They first poison the training dataset to induce performance degradation~\cite{di2023hidden} or implant a backdoor~\cite{huang2024uba}, and then inject camouflage samples to conceal the malicious effects. Once these camouflage samples are unlearned, the hidden effects can be reactivated. However, these methods are designed for Euclidean, non-graph data, where injected samples are independent and not interconnected, and their attack settings differ from ours, making them not directly comparable. Specifically, HiddenPoison~\cite{di2023hidden} assumes a white-box setting, whereas our approach is target-model free and relies on transferable surrogate modeling, and UBA-Inf~\cite{huang2024uba} does not exploit the graph structure. In contrast, our attack focuses on a more challenging and different black-box setting in the graph domain, jointly optimizing both the injected nodes and connectivity.

\begin{remark}[Difference to Camouflaged Poisoning Attacks]
Existing camouflaged poisoning attacks operate in Euclidean domains with white-box access, while our black-box graph setting jointly optimizes injected nodes and their connectivity, making the problem more realistic and challenging.
\end{remark}

\appsection{Optimal Projection onto the Constraint Set}{sec:proof}

To compute the projection onto the set $\Ccal$, which is a central problem mentioned in Section~\ref{sec:optim_atk}, we have the following proposition:

\begin{proposition}[Optimality of Greedy Projection]
Let $\ov{\Ab}_{\mathrm{intra}} \in [0,1]^{m\times m}$ and $\ov{\Ab}_{\mathrm{inter}} \in [0,1]^{m\times n}$ be the relaxed injected adjacency matrices. Let $B$ be a positive integer budget. The Euclidean Projection onto the set $\Ccal$, denoted by $\Pi_\Ccal(\ov{\Ab}_{\mathrm{inter}}, \ov{\Ab}_{\mathrm{intra}})$, can be computed by setting the entries corresponding to the $B$ largest values in each row of the concatenated matrix $[\ov{\Ab}_{\mathrm{inter}}, \ov{\Ab}_{\mathrm{intra}}]$ to $1$, and all others to $0$. 
\end{proposition}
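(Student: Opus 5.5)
The plan is to show that the projection problem separates across rows and, within each row, collapses to a linear selection problem that sorting solves.

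First, write the projection explicitly. Let $\Zb$ denote a feasible point, that is, the concatenated binary matrix $[\Ab_{\mathrm{inter}},\Ab_{\mathrm{intra}}]\in\{0,1\}^{m\times(n+m)}$, and let $\ov{\Ab}:=[\ov{\Ab}_{\mathrm{inter}},\ov{\Ab}_{\mathrm{intra}}]$. The projection minimizes $\|\ov{\Ab}-\Zb\|_F^2$ over $\Zb\in\Ccal$.

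\textbf{Step 1 (row separation).} Both the Frobenius objective and the constraint $\Zb\boldsymbol{1}\le B$ are sums or statements over individual rows. So I would argue that the problem splits into $m$ independent problems, one per row $i$, each with the constraint $\sum_j Z_{ij}\le B$. It then suffices to solve a single row with entries $a_j\in[0,1]$.

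\textbf{Step 2 (linearization).} Since $z_j\in\{0,1\}$ gives $z_j^2=z_j$, the row objective expands as
\begin{align*}
\sum_j (a_j-z_j)^2=\sum_j a_j^2+\sum_j z_j(1-2a_j).
\end{align*}
Let $S:=\{j:z_j=1\}$. Minimizing the objective is then equivalent to choosing $S$ with $|S|\le B$ to maximize $\sum_{j\in S}(2a_j-1)$. The map $a\mapsto 2a-1$ is strictly increasing, so a standard exchange argument shows the optimum is attained by a top-$k$ set in the ordering of the $a_j$. Concretely, if $j\in S$ and $j'\notin S$ with $a_{j'}>a_j$, swapping them does not decrease the sum. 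Ties can be broken arbitrarily.

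\textbf{Step 3 (the size of $S$, the main obstacle).} This is the step I expect to be delicate. With the inequality budget, adding an index to $S$ helps only when $2a_j-1>0$. The exact minimizer is therefore the set of the $B$ largest entries \emph{restricted to those with $a_j\ge 1/2$}. This coincides with the proposition's rule of taking the full top-$B$ set precisely when the $B$-th largest entry of every row is at least $1/2$. It also coincides when the budget is read as exact, $\sum_j z_j=B$, which the algorithm effectively enforces.

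I would state this condition explicitly and prove the proposition under it, namely exact budget or $B$-th largest entry $\ge 1/2$. I would also remark that in the general case the same argument yields the corrected rule: set to $1$ the top-$B$ entries that are at least $1/2$, and set the rest to $0$. Finally, reassembling the optimal rows gives $\Pi_\Ccal$, which completes the proof.
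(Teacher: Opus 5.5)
Your proof is correct and follows the paper's route. You expand the squared Frobenius distance, use $z^2=z$ for binary entries, split the problem into independent rows, and keep the largest entries; your exchange argument does the work of the paper's rearrangement-inequality step.

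The one real difference is the $\sum_j z_j$ term. The paper removes it by assuming that every feasible row has exactly $B$ ones (``assuming the optimal attack utilizes the full budget''). That silently replaces $\Ccal$ by its exact-budget version. You instead keep the term as $\sum_j z_j(1-2a_j)$. This shows that, under the inequality budget actually written in $\Ccal$, the top-$B$ rule is a minimizer exactly when every row's $B$-th largest entry is at least $1/2$. For example, a row whose entries are all $0.1$ projects onto the zero row, not onto a row with $B$ ones. So the proposition as literally stated is false in general.

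Your Step 3 is therefore not a weakness. It makes explicit the hypothesis the paper's proof also needs, namely exact budget or $B$-th largest entry at least $1/2$. Your corrected rule, keeping the top-$B$ entries that are at least $1/2$, is the true Euclidean projection onto $\Ccal$.
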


\begin{proof}
First, we define the combined row-wise matrix for the $i$-th injected node as $\mathbf{r}_i := [\ov{\Ab}_{\mathrm{inter}}[i], \ov{\Ab}_{\mathrm{intra}}[i]] \in \mathbb{R}^{n+m}$. The constraint set $\Ccal$ requires that each row vector has exactly binary entries and sums to at most $B$.

The projection problem minimizes the squared Frobenius norm between the continuous combined matrix $\ov{\Ab}_{\mathrm{com}}$ and the discrete matrix $\Ab_{\mathrm{com}}$:
\begin{align*}
    \Ab^*_{\mathrm{com}} &= \mathop{\mathrm{argmin}}_{\Ab \in \Ccal} \sum_{i,j} (\ov{\Ab}_{ij} - \Ab_{ij})^2 \\
          &= \mathop{\mathrm{argmin}}_{\Ab \in \Ccal} \sum_{i,j} (\ov{\Ab}_{ij}^2 - 2\ov{\Ab}_{ij}\Ab_{ij} + \Ab_{ij}^2).
\end{align*}
Since $\Ab$ is binary ($\Ab_{ij} \in \{0, 1\}$), we have $\Ab_{ij}^2 = \Ab_{ij}$. Furthermore, assuming the optimal attack utilizes the full budget, every feasible row has exactly $B$ ones, implying $\sum_{j} \Ab_{ij}^2 = \sum_{j} \Ab_{ij} = B$. 
The term $\sum \ov{\Ab}_{ij}^2$ is constant with respect to $\Ab$. Thus, the minimization simplifies to:
\begin{align*}
    \Ab^*_{\mathrm{com}} &= \mathop{\mathrm{argmin}}_{\Ab \in \Ccal} \sum_{i} ( -2 \sum_{j} \ov{\Ab}_{ij} \Ab_{ij} + \text{const} ) \\
          &= \mathop{\mathrm{argmax}}_{\Ab \in \Ccal} \sum_{i} \sum_{j} \ov{\Ab}_{ij} \Ab_{ij}.
\end{align*}
The problem decomposes into $m$ independent row-wise maximization problems. For a specific injected node $i$, we seek to maximize the inner product $\sum_{j} \ov{\Ab}_{ij} \Ab_{ij}$ subject to $\sum_{j} \Ab_{ij} \le B$. By the rearrangement inequality, this sum is maximized when the non-zero entries of $\Ab_{ij}$ (which are all $1$) align with the largest coefficients in $\ov{\Ab}_{ij}$. 
Therefore, the optimal solution is to set $\Ab_{ij}=1$ if $\ov{\Ab}_{ij}$ is among the top-$B$ values in row $i$, and $0$ otherwise.
\end{proof}

\begin{figure*}[!ht]
    \centering
    \begin{subfigure}{0.25\linewidth}
        \centering
        \includegraphics[width=\linewidth]{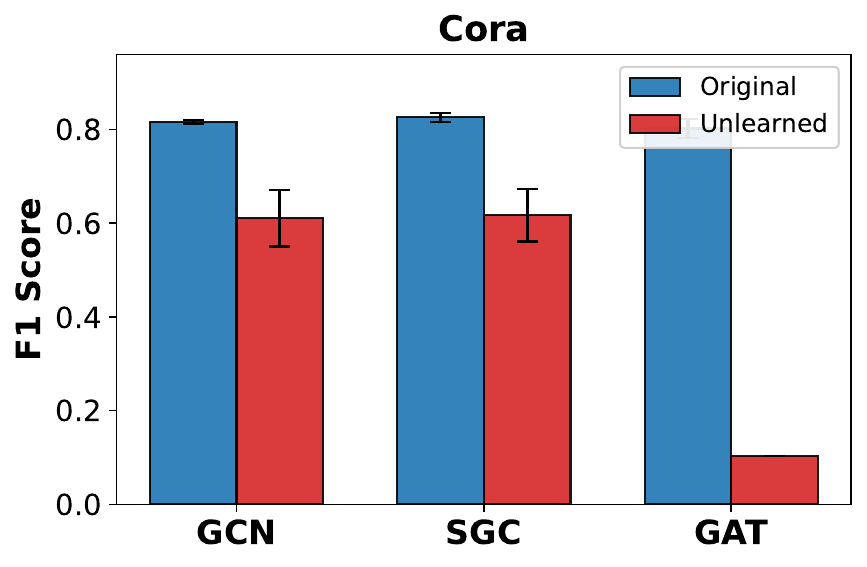}
        %\caption{Cora}
        \label{fig:model_cora}
    \end{subfigure}
    \hfill
    \begin{subfigure}{0.25\linewidth}
        \centering
        \includegraphics[width=\linewidth]{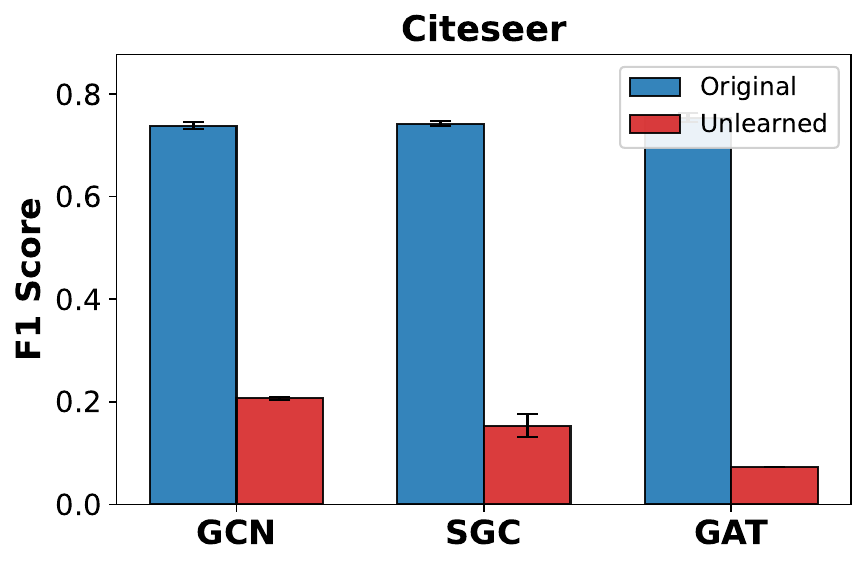}
        %\caption{Citeseer}
        \label{fig:model_citeseer}
    \end{subfigure}
    \hfill
    \begin{subfigure}{0.25\linewidth}
        \centering
        \includegraphics[width=\linewidth]{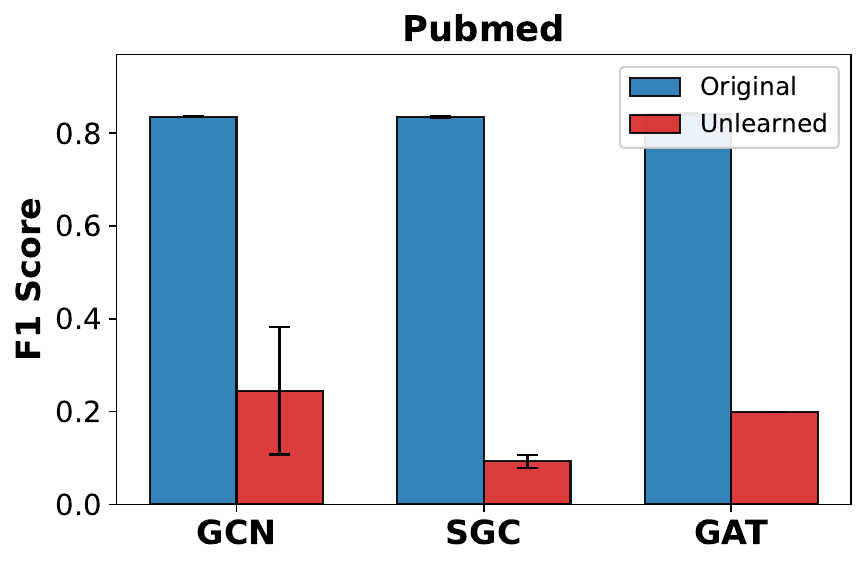}
        %\caption{Pubmed}
        \label{fig:model_pubmed}
    \end{subfigure}
    \vskip -0.3in
    \caption{\textbf{Attack Transferability across Victim Models.} Performance of \textbf{OptimAttack} (generated using a GCN surrogate) when applied to different victim model architectures: GCN, SGC, and GAT. The blue bars represent original accuracy (stealthiness), and red bars represent unlearned accuracy (damage).}
    %\vskip -0.1in
    \label{fig:model_transferability}
\end{figure*}

\begin{table*}[!ht]
\caption{\textbf{Main Comparison Results (with Standard Deviation).} Original accuracy, unlearned accuracy, and accuracy drop ($\Delta$Acc) across datasets. Best results are in \textbf{bold}, and second-best are \underline{underlined}. A clearer version without the standard deviation can be found in Table~\ref{tab:main_new}.}
\label{tab:main_new_full}
\centering
\resizebox{\linewidth}{!}{
\begin{tabular}{llccc|ccc|ccc|ccc}
\toprule[1.5pt]
\multirow{2}{*}{\textbf{\shortstack{Unlearn\\Method}}} & \multirow{2}{*}{\textbf{Attack}} 
  & \multicolumn{3}{c}{\textbf{Cora}} 
  & \multicolumn{3}{c}{\textbf{Citeseer}} 
  & \multicolumn{3}{c}{\textbf{Pubmed}} 
  & \multicolumn{3}{c}{\textbf{Flickr}} \\
\cmidrule(lr){3-5} \cmidrule(lr){6-8} \cmidrule(lr){9-11} \cmidrule(lr){12-14}
 & & Original$\uparrow$ & Unlearned$\downarrow$ & $\Delta$Acc$\downarrow$ 
   & Original$\uparrow$ & Unlearned$\downarrow$ & $\Delta$Acc$\downarrow$ 
   & Original$\uparrow$ & Unlearned$\downarrow$ & $\Delta$Acc$\downarrow$ 
   & Original$\uparrow$ & Unlearned$\downarrow$ & $\Delta$Acc$\downarrow$ \\
\midrule[1pt]
% ==================== GIF ====================
\multirow{10}{*}{\textbf{GIF}}
 & NoAttack & $0.8502 \pm 0.0106$ & $0.8458 \pm 0.0122$ & $0.0044 \pm 0.0162$ & $0.7357 \pm 0.0042$ & $0.7201 \pm 0.0044$ & $0.0156 \pm 0.0061$ & $0.8308 \pm 0.0021$ & $0.8180 \pm 0.0025$ & $0.0129 \pm 0.0033$ & $0.4355 \pm 0.0077$ & $0.4348 \pm 0.0064$ & $0.0007 \pm 0.0100$ \\
 \cmidrule(lr){2-14}
 & Random & $0.8317 \pm 0.0060$ & $0.8435 \pm 0.0038$ & $-0.0118 \pm 0.0071$ & $0.7201 \pm 0.0070$ & $0.7363 \pm 0.0075$ & $-0.0162 \pm 0.0102$ & $0.8267 \pm 0.0019$ & $0.7781 \pm 0.0743$ & $0.0486 \pm 0.0743$ & $0.4355 \pm 0.0078$ & $0.4345 \pm 0.0058$ & $0.0010 \pm 0.0097$ \\
 & Copy & $0.8494 \pm 0.0085$ & $0.8450 \pm 0.0077$ & $0.0044 \pm 0.0115$ & $0.7375 \pm 0.0024$ & $0.7411 \pm 0.0044$ & $-0.0036 \pm 0.0050$ & $0.8308 \pm 0.0012$ & $0.8302 \pm 0.0013$ & $0.0006 \pm 0.0018$ & $0.3648 \pm 0.1533$ & $0.3644 \pm 0.1531$ & $0.0004 \pm 0.2167$ \\
 & NewCopy & $0.8317 \pm 0.0127$ & $0.8435 \pm 0.0068$ & $-0.0118 \pm 0.0144$ & $0.7375 \pm 0.0041$ & $0.7376 \pm 0.0059$ & $-0.0001 \pm 0.0072$ & $0.8362 \pm 0.0024$ & $0.8316 \pm 0.0017$ & $0.0046 \pm 0.0030$ & $0.4438 \pm 0.0243$ & $0.4435 \pm 0.0238$ & $0.0003 \pm 0.0340$ \\
 & TestCopy & $0.8303 \pm 0.0070$ & $0.8295 \pm 0.0082$ & $0.0007 \pm 0.0108$ & $0.7153 \pm 0.0035$ & $0.7243 \pm 0.0055$ & $-0.0090 \pm 0.0065$ & $0.8440 \pm 0.0017$ & $0.7991 \pm 0.0708$ & $0.0449 \pm 0.0708$ & $0.4323 \pm 0.0013$ & $0.3961 \pm 0.0712$ & $0.0362 \pm 0.0712$ \\
 & TestLink & $0.7801 \pm 0.0103$ & $0.8332 \pm 0.0122$ & $-0.0531 \pm 0.0160$ & $0.7279 \pm 0.0049$ & $0.7628 \pm 0.0042$ & $-0.0348 \pm 0.0065$ & $0.8165 \pm 0.0029$ & $0.8329 \pm 0.0021$ & $-0.0163 \pm 0.0036$ & $0.4319 \pm 0.0006$ & $0.4317 \pm 0.0003$ & $0.0002 \pm 0.0007$ \\
 \cmidrule(lr){2-14}
 & SBA & $0.8259 \pm 0.0043$ & $0.8233 \pm 0.0049$ & $0.0026 \pm 0.0065$ & $0.7140 \pm 0.0044$ & $0.7134 \pm 0.0051$ & $0.0006 \pm 0.0067$ & $0.8262 \pm 0.0007$ & $0.8261 \pm 0.0004$ & $0.0001 \pm 0.0008$ & $0.4339 \pm 0.0007$ & $0.4220 \pm 0.0009$ & $0.0119 \pm 0.0011$ \\
 & UGBA & $0.8311 \pm 0.0046$ & $0.7091 \pm 0.0169$ & $0.1220 \pm 0.0175$ & $0.7173 \pm 0.0063$ & $0.7158 \pm 0.0067$ & $0.0015 \pm 0.0092$ & $0.8247 \pm 0.0009$ & $0.8245 \pm 0.0011$ & $0.0002 \pm 0.0014$ & $0.4208 \pm 0.0024$ & $0.4167 \pm 0.0028$ & $0.0041 \pm 0.0034$ \\
 & TDGIA & $0.7538 \pm 0.0089$ & $0.7231 \pm 0.0097$ & $0.0307 \pm 0.0132$ & $0.6917 \pm 0.0071$ & $0.6556 \pm 0.0098$ & $0.0361 \pm 0.0121$ & $0.8181 \pm 0.0059$ & $0.5791 \pm 0.1254$ & $0.2390 \pm 0.1255$ & $0.4057 \pm 0.0008$ & $0.4035 \pm 0.0013$ & $0.0022 \pm 0.0015$ \\
 \cmidrule(lr){2-14}
 & \textbf{OptimAttack} & $0.8162 \pm 0.0043$ & $0.6103 \pm 0.0600$ & $0.2059 \pm 0.0601$ & $0.7381 \pm 0.0064$ & $0.2066 \pm 0.0029$ & $0.5315 \pm 0.0071$ & $0.8351 \pm 0.0009$ & $0.2443 \pm 0.1372$ & $0.5908 \pm 0.1372$ & $0.4320 \pm 0.0008$ & $0.2860 \pm 0.1384$ & $0.1460 \pm 0.1384$ \\
\midrule[1pt]

% ==================== CEU ====================
\multirow{10}{*}{\textbf{CEU}}
 & NoAttack & $0.8214 \pm 0.0060$ & $0.8192 \pm 0.0066$ & $0.0022 \pm 0.0089$ & $0.7243 \pm 0.0044$ & $0.7075 \pm 0.0096$ & $0.0168 \pm 0.0106$ & $0.8084 \pm 0.0051$ & $0.7956 \pm 0.0053$ & $0.0128 \pm 0.0073$ & $0.4440 \pm 0.0249$ & $0.4441 \pm 0.0251$ & $-0.0001 \pm 0.0353$ \\
 \cmidrule(lr){2-14}
 & Random & $0.8103 \pm 0.0111$ & $0.8244 \pm 0.0095$ & $-0.0140 \pm 0.0146$ & $0.7195 \pm 0.0110$ & $0.7321 \pm 0.0108$ & $-0.0126 \pm 0.0154$ & $0.7951 \pm 0.0121$ & $0.3849 \pm 0.1536$ & $0.4102 \pm 0.1541$ & $0.4318 \pm 0.0004$ & $0.4316 \pm 8.9636$ & $0.0002 \pm 8.9636$ \\
 & Copy & $0.8221 \pm 0.0108$ & $0.8220 \pm 0.0135$ & $0.0001 \pm 0.0173$ & $0.7195 \pm 0.0082$ & $0.7159 \pm 0.0108$ & $0.0036 \pm 0.0136$ & $0.8049 \pm 0.0056$ & $0.7467 \pm 0.0836$ & $0.0582 \pm 0.0838$ & $0.4432 \pm 0.0232$ & $0.3987 \pm 0.0657$ & $0.0444 \pm 0.0697$ \\
 & NewCopy & $0.8089 \pm 0.0079$ & $0.8229 \pm 0.0099$ & $-0.0140 \pm 0.0127$ & $0.7195 \pm 0.0094$ & $0.7207 \pm 0.0076$ & $-0.0012 \pm 0.0121$ & $0.8041 \pm 0.0096$ & $0.7831 \pm 0.0246$ & $0.0210 \pm 0.0264$ & $0.4317 \pm 0.0003$ & $0.4316 \pm 0.0000$ & $0.0001 \pm 0.0003$ \\
 & TestCopy & $0.8155 \pm 0.0074$ & $0.8177 \pm 0.0050$ & $-0.0022 \pm 0.0089$ & $0.7207 \pm 0.0089$ & $0.7205 \pm 0.0087$ & $0.0002 \pm 0.0125$ & $0.8121 \pm 0.0110$ & $0.3869 \pm 0.1584$ & $0.4252 \pm 0.1588$ & $0.4420 \pm 0.0197$ & $0.4426 \pm 0.0216$ & $-0.0006 \pm 0.0293$ \\
 & TestLink & $0.7498 \pm 0.0120$ & $0.8185 \pm 0.0059$ & $-0.0686 \pm 0.0134$ & $0.6967 \pm 0.0127$ & $0.7411 \pm 0.0094$ & $-0.0444 \pm 0.0158$ & $0.7438 \pm 0.0238$ & $0.3274 \pm 0.1582$ & $0.4164 \pm 0.1599$ & $0.4322 \pm 0.0011$ & $0.4321 \pm 0.0010$ & $0.0000 \pm 0.0015$ \\
 \cmidrule(lr){2-14}
 & SBA & $0.8037 \pm 0.0054$ & $0.7993 \pm 0.0078$ & $0.0044 \pm 0.0095$ & $0.6941 \pm 0.0023$ & $0.6947 \pm 0.0039$ & $-0.0006 \pm 0.0045$ & $0.7985 \pm 0.0118$ & $0.5682 \pm 0.2948$ & $0.2303 \pm 0.2950$ & $0.4232 \pm 6.7227$ & $0.4230 \pm 0.0007$ & $0.0002 \pm 6.7227$ \\
 & UGBA & $0.8085 \pm 0.0083$ & $0.6909 \pm 0.0090$ & $0.1176 \pm 0.0123$ & $0.6926 \pm 0.0052$ & $0.6926 \pm 0.0048$ & $-0.0000 \pm 0.0071$ & $0.7877 \pm 0.0161$ & $0.6562 \pm 0.2311$ & $0.1315 \pm 0.2316$ & $0.4219 \pm 0.0024$ & $0.4218 \pm 0.0024$ & $0.0001 \pm 0.0034$ \\
 & TDGIA & $0.5309 \pm 0.0112$ & $0.5279 \pm 0.0152$ & $0.0030 \pm 0.0189$ & $0.5158 \pm 0.0181$ & $0.4857 \pm 0.0156$ & $0.0301 \pm 0.0239$ & $0.6116 \pm 0.0305$ & $0.6229 \pm 0.1205$ & $-0.0114 \pm 0.1243$ & $0.3912 \pm 0.0013$ & $0.3850 \pm 0.00023$ & $0.0062 \pm 0.0026$ \\
 \cmidrule(lr){2-14}
 & \textbf{OptimAttack} & $0.7904 \pm 0.0075$ & $0.5801 \pm 0.0376$ & $0.2103 \pm 0.0383$ & $0.7267 \pm 0.0102$ & $0.2276 \pm 0.0311$ & $0.4991 \pm 0.0327$ & $0.8102 \pm 0.0075$ & $0.2146 \pm 0.0228$ & $0.5956 \pm 0.0240$ & $0.4322 \pm 0.0002$ & $0.2858 \pm 0.1237$ & $0.1464 \pm 0.1237$ \\
\midrule[1pt]

% ==================== GA ====================
\multirow{10}{*}{\textbf{GA}}
 & NoAttack & $0.8443 \pm 0.0120$ & $0.8177 \pm 0.0113$ & $0.0266 \pm 0.0165$ & $0.7417 \pm 0.0074$ & $0.6805 \pm 0.0090$ & $0.0612 \pm 0.0117$ & $0.8320 \pm 0.0014$ & $0.8158 \pm 0.0023$ & $0.0162 \pm 0.0027$ & $0.4461 \pm 0.0178$ & $0.4336 \pm 0.0028$ & $0.0125 \pm 0.0180$ \\
 \cmidrule(lr){2-14}
 & Random & $0.8295 \pm 0.0072$ & $0.5166 \pm 0.0337$ & $0.3129 \pm 0.0344$ & $0.7171 \pm 0.0072$ & $0.4613 \pm 0.1023$ & $0.2559 \pm 0.1025$ & $0.8279 \pm 0.0011$ & $0.7041 \pm 0.0177$ & $0.1238 \pm 0.0177$ & $0.4433 \pm 0.0233$ & $0.4317 \pm 0.0002$ & $0.0116 \pm 0.0233$ \\
 & Copy & $0.8524 \pm 0.0040$ & $0.8258 \pm 0.0097$ & $0.0266 \pm 0.0105$ & $0.7321 \pm 0.0048$ & $0.7177 \pm 0.0112$ & $0.0144 \pm 0.0122$ & $0.8304 \pm 0.0014$ & $0.8291 \pm 0.0020$ & $0.0013 \pm 0.0025$ & $0.2125 \pm 0.1763$ & $0.4331 \pm 0.0029$ & $-0.2206 \pm 0.1763$ \\
 & NewCopy & $0.8325 \pm 0.0068$ & $0.8635 \pm 0.0084$ & $-0.0310 \pm 0.0108$ & $0.7393 \pm 0.0040$ & $0.6859 \pm 0.0052$ & $0.0535 \pm 0.0066$ & $0.8371 \pm 0.0012$ & $0.8334 \pm 0.0023$ & $0.0038 \pm 0.0026$ & $0.3615 \pm 0.1502$ & $0.4298 \pm 0.0042$ & $-0.0683 \pm 0.1502$ \\
 & TestCopy & $0.8273 \pm 0.0105$ & $0.7845 \pm 0.0289$ & $0.0428 \pm 0.0308$ & $0.7237 \pm 0.0038$ & $0.7117 \pm 0.0042$ & $0.0120 \pm 0.0057$ & $0.8455 \pm 0.0021$ & $0.8092 \pm 0.0053$ & $0.0363 \pm 0.0057$ & $0.4412 \pm 0.0192$ & $0.4014 \pm 0.0740$ & $0.0398 \pm 0.0765$ \\
 & TestLink & $0.7786 \pm 0.0126$ & $0.8391 \pm 0.0030$ & $-0.0605 \pm 0.0129$ & $0.7243 \pm 0.0077$ & $0.7435 \pm 0.0086$ & $-0.0192 \pm 0.0116$ & $0.8176 \pm 0.0040$ & $0.8338 \pm 0.0027$ & $-0.0161 \pm 0.0048$ & $0.4054 \pm 0.0780$ & $0.4183 \pm 0.0859$ & $-0.0128 \pm 0.1160$ \\
 \cmidrule(lr){2-14}
 & SBA & $0.8288 \pm 0.0036$ & $0.8348 \pm 0.0043$ & $-0.0059 \pm 0.0056$ & $0.7155 \pm 0.0042$ & $0.6845 \pm 0.0099$ & $0.0310 \pm 0.0108$ & $0.8265 \pm 0.0014$ & $0.8271 \pm 0.0024$ & $-0.0006 \pm 0.0028$ & $0.4339 \pm 0.0007$ & $0.4220 \pm 0.0009$ & $0.0119 \pm 0.0011$ \\
 & UGBA & $0.8322 \pm 0.0014$ & $0.7767 \pm 0.0061$ & $0.0555 \pm 0.0063$ & $0.7152 \pm 0.0036$ & $0.4262 \pm 0.1090$ & $0.2890 \pm 0.1090$ & $0.8252 \pm 0.0009$ & $0.8285 \pm 0.0017$ & $-0.0033 \pm 0.0020$ & $0.4208 \pm 0.0024$ & $0.4167 \pm 0.0028$ & $0.0041 \pm 0.0034$ \\
 & TDGIA & $0.7397 \pm 0.0156$ & $0.7512 \pm 0.0146$ & $-0.0115 \pm 0.0213$ & $0.6911 \pm 0.0072$ & $0.6830 \pm 0.0073$ & $0.0081 \pm 0.0102$ & $0.7924 \pm 0.0518$ & $0.7905 \pm 0.0490$ & $0.0019 \pm 0.0713$ & $0.4055 \pm 0.0007$ & $0.4002 \pm 0.0014$ & $0.0053 \pm 0.0016$ \\
 \cmidrule(lr){2-14}
 & \textbf{OptimAttack} & $0.8258 \pm 0.0097$ & $0.4790 \pm 0.0238$ & $0.3469 \pm 0.0258$ & $0.7345 \pm 0.0065$ & $0.4138 \pm 0.0994$ & $0.3207 \pm 0.0996$ & $0.8279 \pm 0.0018$ & $0.6798 \pm 0.0493$ & $0.1481 \pm 0.0493$ & $0.4318 \pm 0.0007$ & $0.3012 \pm 0.1029$ & $0.1306 \pm 0.1029$ \\
\bottomrule[1.5pt]
\end{tabular}
}
\end{table*}

\appsection{Implementation Details}{sec:impl_detail}

{\bf Intuitive Baselines.} We compare our optimization-based attack with several intuitive baseline attacks, described as follows:
\begin{itemize}[leftmargin=0.05\linewidth]
    \item {\bf Random:} Randomly inject $m$ nodes into the training graph. Each injected node’s feature vector is sampled from a Gaussian distribution matching the mean and standard deviation of the original node features. The edges and labels of these injected nodes are assigned uniformly at random, where each node has exactly $B$ edges.
    \item {\bf Copy:} Randomly select $m$ labeled nodes from the training graph and duplicate them with the same features and labels. If a copied node exceeds the edge budget $B$, a subset of its edges is randomly dropped to satisfy the budget constraint.
    \item {\bf NewCopy:} Rank all labeled training nodes by the number of edges they have to unlabeled nodes, and inject the top-$m$ nodes with copied features and labels. When randomly selecting edges for the injected nodes, priority is given to connections with unlabeled nodes.
    \item {\bf TestCopy:} Similar to LabeledCopy, but $m$ nodes are selected from the unlabeled set. The duplicated nodes retain the same features, while their labels are randomly assigned.
    \item {\bf TestLink:} Randomly select $m$ nodes from the test set (unlabeled during training) to copy their features, but assign them random labels. Unlike TestCopy, which duplicates existing edges, this method connects each injected node to $B$ randomly selected test nodes, explicitly creating direct links between the injected triggers and the victim population.
\end{itemize}

{\bf Implementation Details or OptimAttack.} For the injection budget, we set the number of injected nodes to $5\%$ of the training graph size, with an edge budget of $B=5$ per node. We perform optimization for $T=200$ steps on Cora, Citeseer, and Pubmed, and $T=10$ steps on Flickr. The learning rates are set to $\eta_a = 0.5$ for adjacency matrix optimization and $\eta_x = 5 \times 10^{-4}$ for feature optimization. On the larger Flickr dataset, we apply our scalable approximation strategy: instead of optimizing connections to all test nodes, we sample $10\%$ of the test set and select 3 of their 1-hop neighbors as the candidate connection set. 
The surrogate model $\ov{f}$ follows the same architecture as the victim model (a two-layer GCN). We initialize the optimization using a baseline attack chosen via validation performance: we use Copy initialization for GIF and CEU, and Random initialization for GA. The unlearning process on the surrogate model uses Gradient Ascent (GA) with a learning rate of $0.1$ and a coefficient $\gamma=1.0$.

\appsection{Time Complexity}{sec:time}

{\bf Complexity of Full Optimization.}
Consider the optimization process in Algorithm~\ref{alg:pgd} on a graph with $n$ nodes, $E$ edges, and feature dimension $d$. The surrogate model is a standard GNN (e.g., GCN) with $L$ layers.
In the naive formulation, the optimization variables include the dense matrix $\ov{\Ab}_{\mathrm{inter}} \in \mathbb{R}^{m \times n}$ and $\ov{\Ab}_{\mathrm{intra}} \in \mathbb{R}^{m \times m}$.
Consequently, the computation of gradients $\nabla_{\ov{\Ab}_{\mathrm{inter}}} \Lcal_{\mathrm{atk}}$ necessitates backpropagating through a dense adjacency structure, resulting in a time complexity of $O(L(E + mn)d + (n+m)d^2)$ per attack step.
The term $O(Lmnd)$ for message passing on the injected part of adjacency matrix can be large on large scale graphs (e.g., $n > 5\times 10^4$) and with more attack node budget $m$. 

{\bf Reduced Complexity.}
Let $n_{\mathrm{sub}} = |\mathcal{S}_{\mathrm{cand}}|$ denote the size of the relevant subgraph. By restricting optimization to this subset, the number of active edge variables reduces from $mn$ to $m \cdot n_{\mathrm{sub}}$. Since $n_{\mathrm{sub}} \ll n$ in large sparse graphs, the per-step time complexity becomes $O(L(E + m \cdot n_{\mathrm{sub}})d + (n+m)d^2)$.
This enables our method to efficiently scale to datasets with millions of nodes.

\appsection{Additional Experiment Results}{sec:more_results}

{\bf Victim Model Transferrability.} We evaluate whether our unlearning corruption attack, optimized using a surrogate 2-layer GCN, can transfer to other GNN architectures. Figure~\ref{fig:model_transferability} reports the attack performance on three victim models: GCN, SGC, and Graph Attention Networks (GAT). We observe that the attack remains highly effective across all architectures, demonstrating strong transferability. Specifically, SGC exhibits vulnerability levels comparable to GCN, likely due to their shared spectral filtering roots. Notably, GAT appears even more susceptible to the attack, suffering the most severe performance collapse after unlearning (e.g., on Cora, GAT accuracy drops to near random). This suggests that the corruption patterns identified by our GCN-based surrogate exploit fundamental vulnerabilities in graph message-passing mechanisms rather than overfitting to a specific architecture.

{\bf Standard Deviation of Main Results.} In Table~\ref{tab:main_new_full}, we provide a full version of our comparison results in Table~\ref{tab:main_new}, which includes the standard deviation of all the experiments.

\ifdefined\isarxiv

\clearpage
\bibliographystyle{ACM-Reference-Format}
\bibliography{ref}

\else

\fi 

\end{document}